\newcolumntype{b}{X}
\newcolumntype{s}{>{\hsize=.4\hsize}X}
\newtheorem{theorem}{Theorem}[section]
\newtheorem{corollary}{Corollary}[theorem]
\theoremstyle{definition}
\theoremstyle{remark}
\newtheoremstyle{named}{}{}{\itshape}{}{\bfseries}{.}{.5em}{#1 \thmnote{#3}}
\theoremstyle{named}
\newtheorem*{namedtheorem}{Theorem}
\newtheorem*{namedcorollary}{Corollary}
\icmltitlerunning{Post-Inference Prior Swapping}
\begin{document} 

\twocolumn[
\icmltitle{Post-Inference Prior Swapping}





\begin{icmlauthorlist}
\icmlauthor{Willie Neiswanger}{mld}
\icmlauthor{Eric Xing}{scs}
\end{icmlauthorlist}

\icmlaffiliation{mld}{Carnegie Mellon University, Machine Learning Department, Pittsburgh, USA}
\icmlaffiliation{scs}{CMU School of Computer Science}

\icmlcorrespondingauthor{Willie Neiswanger}{willie@cs.cmu.edu}

\icmlkeywords{boring formatting information, machine learning, ICML}

\vskip 0.3in
]



\printAffiliationsAndNotice{}  

\begin{abstract} 
While Bayesian methods are praised for their 
ability to incorporate useful prior knowledge, 
in practice, convenient priors that allow for 
computationally cheap or tractable inference
are commonly used.
In this paper, we investigate the following 
question:
\emph{for a given model, is it possible to
compute an inference result with any 
convenient false prior, and afterwards,
given any target prior of interest,
quickly transform this result into the target 
posterior?}
A potential solution is to use importance sampling (IS).
However, we demonstrate that IS will fail for many
choices of the target prior, depending on 
its parametric form and similarity to 
the false prior.
Instead, we propose prior swapping, a method
that leverages the pre-inferred false posterior
to efficiently generate accurate posterior samples 
under arbitrary target priors.
Prior swapping lets us apply less-costly
inference algorithms to certain models,
and incorporate new or updated prior
information ``post-inference''.
We give theoretical guarantees about our 
method, and demonstrate it empirically
on a number of models and priors.
\end{abstract}

\vspace{-5mm}

\vspace{-2mm}
\section{Introduction}
\label{introduction}
\vspace{-1mm}

There are many cases in Bayesian modeling where
a certain choice of prior distribution allows for
computationally simple or 
tractable inference. For example,
\begin{itemize}[topsep=1pt,itemsep=0pt]
    \item Conjugate priors 
    yield posteriors with a known parametric form and 
    therefore allow for non-iterative, exact
    inference \citep{diaconis1979conjugate}.
    
    \item Certain priors
    yield models with tractable conditional 
    or marginal distributions,
    which allows efficient approximate inference 
    algorithms to be applied (e.g. Gibbs sampling 
    \citep{smith1993bayesian},
    sampling in collapsed models 
    \cite{teh2006collapsed}, 
    or mean-field variational methods 
    \citep{wang2013variational}).
    
    \item Simple parametric priors allow for 
    computationally cheap density queries, 
    maximization, and sampling, 
    which can reduce costs in iterative inference 
    algorithms (e.g. Metropolis-Hastings
    \citep{metropolis1953equation}, 
    gradient-based MCMC
    \citep{neal2011mcmc}, 
    or sequential Monte 
    Carlo \citep{doucet2000sequential}).
\end{itemize}

For these reasons,
one might hope to infer a result under
a convenient-but-unrealistic prior, 
and afterwards, attempt to correct the result.
More generally, given an inference result
(under a convenient prior or otherwise),
one might wish to incorporate updated 
prior information, or see a result under 
different prior assumptions, without
having to re-run a costly inference algorithm.

This leads to the main question of this
paper:
for a given model, is it possible 
to use any convenient false prior to infer a 
false posterior, and afterwards, 
given any target prior of interest,
efficiently and accurately infer
the associated target posterior?

One potential strategy involves sampling
from the false posterior
and reweighting these samples via 
importance sampling (IS). 
However, depending on the chosen
target prior---both its parametric
form and similarity to the false prior---the
resulting inference can be inaccurate 
due to high or infinite variance IS 
estimates
(demonstrated in 
Sec.~\ref{importanceSampling}).

We instead aim to devise a method that yields 
accurate inferences for arbitrary target priors.
Furthermore, like IS, we want to make use of the 
pre-inferred false posterior,
without simply running standard inference 
algorithms on the target posterior.
Note that most standard inference algorithms are 
iterative and \emph{data-dependent}: 
parameter updates at each iteration involve data, 
and the computational cost or quality 
of each update depends on the amount of data used.
Hence, running inference algorithms directly 
on the target posterior can be costly
(especially given a large amount of data or 
many target priors of interest)
and defeats the purpose of using a convenient false prior.

In this paper, we propose \emph{prior swapping},
an iterative, \emph{data-independent}
method for generating accurate 
posterior samples under arbitrary 
target priors. 
Prior swapping uses the pre-inferred
false posterior to perform
efficient updates that do not depend on 
the data, and thus proceeds very quickly.
We therefore advocate breaking difficult 
inference problems into two easier steps:
first, do inference using the most 
computationally convenient prior for a 
given model, and then, for all future priors 
of interest, use prior swapping.

In the following sections,
we demonstrate the pitfalls of using IS, 
describe the proposed
prior swapping methods for different types
of false posterior inference results (e.g. 
exact or approximate density 
functions, or samples) and
give theoretical guarantees for these
methods. Finally, we show empirical results on 
heavy-tailed and sparsity 
priors in Bayesian generalized linear models,
and relational priors over components in 
mixture and topic models.

\vspace{-2mm}
\section{Methodology}
\label{methodology}
\vspace{-1mm}

Suppose we have a dataset of $n$ 
vectors $x^n$ $=$ $\{ x_1,\ldots,
x_n \}$, $x_i \in \mathbb{R}^p$, and
we have chosen a family of models
with the likelihood function
$L(\theta|x^n) = p(x^n|\theta)$,  
parameterized by $\theta \in \mathbb{R}^d$.
Suppose we have a prior distribution over the
space of model parameters $\theta$, with 
probability density function (PDF)
$\pi(\theta)$.
The likelihood and prior define a joint model
with PDF $p(\theta,x^n)$ $=$ $\pi(\theta)
L(\theta|x^n)$.
In Bayesian inference, we are interested in 
computing the posterior (conditional) 
distribution of this joint 
model, with PDF
\begin{align}
    p(\theta|x^n)
    =
    \frac{\pi(\theta) L(\theta|x^n)}
    {\int \pi(\theta) L(\theta|x^n) 
    \hspace{1mm} d\theta}.
\end{align}
Suppose we've chosen a different prior distribution
$\pi_f(\theta)$, which we refer to as a 
\emph{false prior} (while we refer to 
$\pi(\theta)$ as the \emph{target prior}). We
can now define a new posterior
\begin{align}
    p_f(\theta|x^n)
    =
    \frac{\pi_f(\theta) L(\theta|x^n)}
    {\int \pi_f(\theta) L(\theta)|x^n) 
    \hspace{1mm} d\theta} 
\end{align}
which we refer to as a \emph{false posterior}.

We are interested in the following task:
given a false posterior inference result
(i.e. samples from $p_f(\theta|x^n)$, 
or some exact or approximate PDF), 
choose an arbitrary target prior $\pi(\theta)$ and
efficiently sample from the associated target
posterior $p(\theta|x^n)$---or, more generally,
compute an expectation $\mu_h$ $=$ 
$\mathbb{E}_p\left[h(\theta)\right]$ for some 
test function $h(\theta)$ with respect to
the target posterior.

\vspace{-1mm}
\subsection{Importance Sampling and Prior Sensitivity}
\label{importanceSampling}
\vspace{-1mm}
We begin by describing an initial strategy, 
and existing work in a related 
task known as prior sensitivity analysis.

Suppose we have $T$ false posterior samples 
$\{ \tilde{\theta}_t \}_{t=1}^T$
$\sim$ $p_f(\theta|x^n)$.
In importance sampling (IS), 
samples from an importance
distribution are used to estimate the expectation of a 
test function with respect to a target distribution.
A straightforward idea is to use the false posterior as an 
importance distribution, and 
compute the IS estimate
\begin{align}
    \hat{\mu}_h^\text{IS} = 
    \sum_{t=1}^T w(\tilde{\theta}_t) h(\tilde{\theta}_t)
\end{align}
where the weight function
$w(\theta) \propto$ 
$\frac{p(\theta|x^n)}{p_f(\theta|x^n)} \propto$ 
$\frac{\pi(\theta)}{\pi_f(\theta)}$,
and the T weights are normalized to sum to one.

IS-based methods have been developed
for the task of 
\emph{prior sensitivity analysis} (PSA). In PSA,
the goal is to determine how the posterior varies
over a sequence of priors (e.g. over a parameterized 
family of priors $\pi(\theta;\gamma_i)$, $i=0,1,\ldots$).
Existing work has proposed inferring a single posterior under prior $\pi(\theta;\gamma_0)$, and then 
using IS methods to infer further
posteriors in the sequence \cite{besag1995bayesian,
hastings1970monte,bornn2010efficient}.

This strategy is effective when subsequent priors 
 are similar enough, but breaks down
when two priors are sufficiently
dissimilar, or are from ill-matched parametric families, 
which we illustrate in an example below.

Note that, in general for IS,
as $T \rightarrow \infty$,
$\hat{\mu}_h^\text{IS} \rightarrow \mu_h$ 
almost surely.
However, IS estimates can still fail in practice
if $\hat{\mu}_h^\text{IS}$ has high or infinite variance.
If so, the variance of the weights $w(\tilde{\theta}_t)$ 
will be large (a problem often referred to as weight 
degeneracy), which can lead to inaccurate estimates.
In our case, the variance of $\hat{\mu}_h^\text{IS}$ 
is only finite if 
\begin{align}
    \mathbb{E}_{p_f}\left[h(\theta)^2
    \frac{\pi(\theta)^2}{\pi_f(\theta)^2} \right]
    \propto
    \mathbb{E}_p\left[h(\theta)^2
    \frac{\pi(\theta)}{\pi_f(\theta)} \right]
     < \infty.
\end{align}
For a broad class of $h$, this is satisfied if there
exists $M \in \mathbb{R}$ such that 
$\frac{\pi(\theta)}{\pi_f(\theta)} < M, \forall \theta$ 
\cite{geweke1989bayesian}.
Given some pre-inferred $p_f(\theta|x^n)$ with
false prior $\pi_f(\theta)$,
the accuracy of IS thus depends on  
the target prior of interest.
For example, if $\pi(\theta)$ has heavier tails than
$\pi_f(\theta)$, the variance of $\hat{\mu}_h^\text{IS}$ 
will be infinite for many $h$. Intuitively, we 
expect the variance to be higher for $\pi$ that are more
dissimilar to $\pi_f$.

\begin{figure*}[!ht]
        \center{\includegraphics[width=1\textwidth]
        {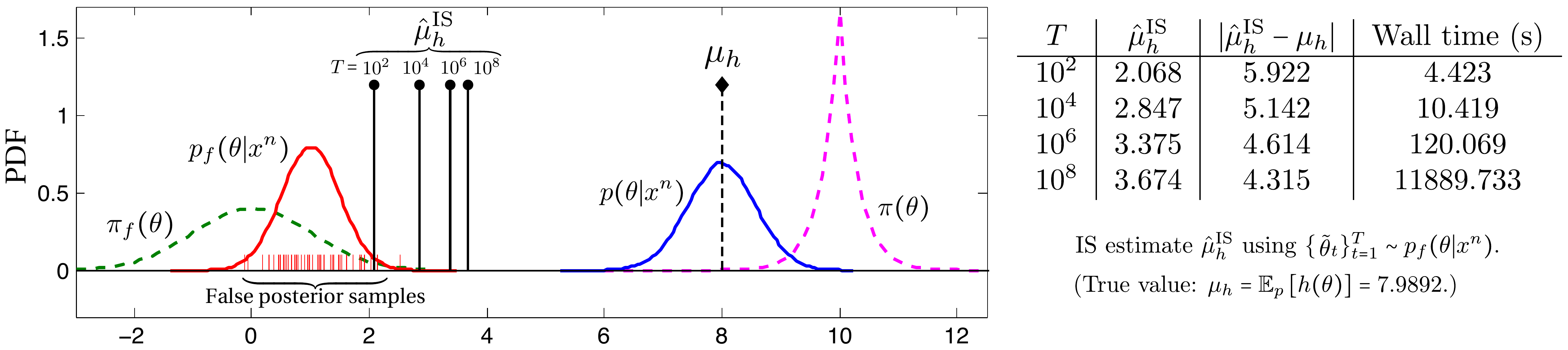}}
        \caption{ \label{fig:runningExample01}
        Importance sampling with false posterior
        samples. As the number of samples $T$ grows, the 
        difference between the IS estimate 
        $\hat{\mu}_h^\text{IS}$ and the true value $\mu_h$
        decreases increasingly slowly. The difference
        remains large even when $T=10^8$.
        See text for analysis.}
\end{figure*}

We show a concrete example of this in 
Fig.~\ref{fig:runningExample01}. Consider a 
normal model for data $x^n \sim \mathcal{N}(\theta,1)$,
with a standard normal false prior
$\pi_f(\theta) = \mathcal{N}(\theta|0,1)$. 
This yields a closed-form false posterior
(due to the conjugate $\pi_f$), which is also normal.
Suppose we'd like to estimate the posterior expectation under
a Laplace target prior, with mean 10 and 
variance 1, for test function $h(\theta) = \theta$ 
(i.e. an estimate of the target posterior mean).
We draw $T$ false posterior samples 
$\{\tilde{\theta}_t\}_{t=1}^T$
$\sim$ $p_f(\theta|x^n)$, compute weights 
$w(\tilde{\theta}_t)$
and IS estimate $\hat{\mu}_h^\text{IS}$, and
compare it with the true expectation $\mu_h$.

We see in Fig.~\ref{fig:runningExample01} 
that $|\mu_h - \hat{\mu}_h^\text{IS}|$ slows 
significantly as $T$ increases, and maintains a high error
even as $T$ is made very large.
We can analyze this issue theoretically. 
Suppose we want $|\mu_h - \hat{\mu}_h^\text{IS}| < \delta$.
Since we know $p_f(\theta|x^n)$ is normal, we can compute 
a lower bound on the number of 
false posterior samples $T$ that would 
be needed for the expected estimate 
to be within $\delta$ of $\mu_h$.
Namely, if $p_f(\theta|x^n) = \mathcal{N}(\theta|m,s^2)$,
in order for 
$|\mu_h - \mathbb{E}_{p_f}[\hat{\mu}_h^\text{IS}]| < \delta$,
we'd need
\begin{align*}
    T \geq \exp \left\{ \frac{1}{2s^2} 
    (|\mu_h - m| - \delta)^2 \right\}.
\end{align*}
In the example in Fig.~\ref{fig:runningExample01},
we have $m=1$, $s^2 = 0.25$, and $\mu_h = 7.9892$.
Hence, for 
$|\mu_h - \mathbb{E}_{p_f}[\hat{\mu}_h^\text{IS}]| < 1$,
we'd need $T>10^{31}$ samples
(see appendix for full details of this analysis). 
Note that this bound actually has nothing to do with the 
parametric form of $\pi(\theta)$---it is 
based solely on the normal false posterior,
and its distance to the target
posterior mean $\mu_h$. However, even if this distance
was small, the importance estimate
would still have infinite variance due to 
the Laplace target prior.
Further, note that the situation can significantly worsen in
higher dimensions, or if the false posterior has a 
lower variance.






\subsection{Prior Swapping}
\label{priorswapping}
We'd like a method that will work well
even when false and target priors 
$\pi_f(\theta)$ and $\pi(\theta)$ 
are significantly different, or are from 
different parametric families, with performance
that does not worsen (in accuracy nor computational 
complexity) as the priors are made more dissimilar.

Redoing inference for each new target posterior 
can be very costly, especially when the 
data size $n$ is large, because
the per-iteration cost of 
most standard inference algorithms
scales with $n$, and many iterations
may be needed for accurate inference.
This includes both MCMC and sequential monte carlo (SMC)
algorithms (i.e. repeated-IS-methods that infer 
a sequence of distributions). In SMC, 
the per-iteration cost still scales with $n$, 
and the variance estimates can still be infinite 
if subsequent distributions are ill-matched.


Instead, we aim to leverage the inferred
false posterior to more-efficiently 
compute any future target posterior.
We begin by defining a 
\emph{prior swap density} $p_s(\theta)$. 
Suppose for now that a false posterior inference 
algorithm has returned a density 
function $\tilde{p}_f(\theta)$ (we will give more 
details on $\tilde{p}_f$ later; assume for now
that it is either equal to 
$p_f(\theta|x^n)$ or approximates it). We then
define the prior swap density as
\begin{align}
    p_s(\theta) 
    \propto \frac{\tilde{p}_f(\theta) \pi(\theta)}{\pi_f(\theta)}.
\end{align}

\begin{figure*}[!ht]
        \center{\includegraphics[width=1\textwidth]
        {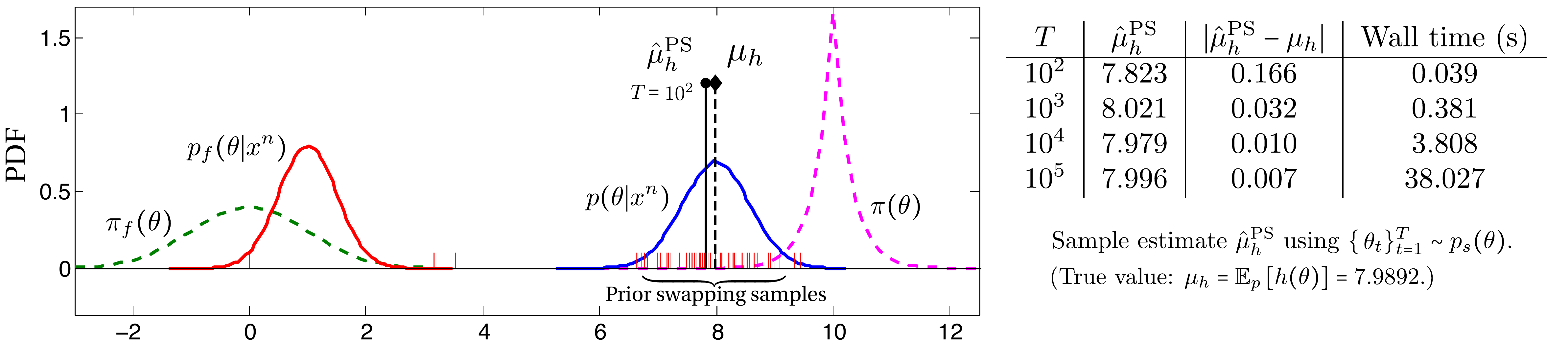}}
        \caption{\label{fig:runningExample02} 
        Using prior swapping to compute estimate
        $\hat{\mu}_h^\text{PS}$ by drawing samples 
        $\{\theta_t\}_{t=1}^T \sim p_s(\theta)$.
        }
\end{figure*}

Note that if $\tilde{p}_f(\theta) = p_f(\theta|x^n)$, then
$p_s(\theta) = p(\theta|x^n)$. However,
 depending on how we represent $\tilde{p}_f(\theta)$, 
$p_s(\theta)$ can have a much simpler analytic 
representation than $p(\theta|x^n)$, which is 
typically defined via a likelihood function 
(i.e. a function of the data) and causes inference
algorithms to have costs that scale 
with the data size $n$.
Specifically, we will only use low-complexity
$\tilde{p}_f(\theta)$ that can 
be evaluated in constant time with 
respect to the data size $n$.

Our general strategy is to use
$p_s(\theta)$ as a surrogate for 
$p(\theta|x^n)$ in standard MCMC 
or optimization procedures, 
to yield data-independent algorithms
with constant cost per iteration.
Intuitively, the likelihood information
is captured by the false posterior---we 
make use of this instead of the
likelihood function, which is costly to evaluate.

More concretely, at each iteration
in standard inference algorithms,
we must evaluate a data-dependent function associated with
the posterior density. For example, we 
evaluate a function proportional to 
$p(\theta|x^n)$ in 
Metropolis-Hastings (MH) 
\citep{metropolis1953equation},
and $\nabla_\theta \log  p(\theta | x^n)$
in gradient-based MCMC methods (such as Langevin dynamics (LD)
\cite{rossky1978brownian} and
Hamiltonian Monte Carlo (HMC) \citep{neal2011mcmc})
and in optimization procedures 
that yield a MAP point estimate.
In prior swapping, we
instead evaluate $p_s(\theta)$ in MH, or 
$\nabla_\theta \log p_s(\theta)$ in 
LD, HMC, or gradient optimization 
to a MAP estimate (see appendix for algorithm pseudocode).
Here, each iteration only requires
evaluating a few simple analytic expressions,
and thus has $O(1)$ complexity with respect to data size.

We demonstrate prior swapping on our
previous example (using a normal false prior
and Laplace target prior) in Fig.~\ref{fig:runningExample02}, 
where we have a closed-form
(normal PDF) $\tilde{p}_f(\theta)$. 
To do prior swapping,
we run a Metropolis-Hastings algorithm
on the target density $p_s(\theta)$. Note
that drawing each sample in this Markov chain 
does not involve the data $x^n$, and can 
be done in constant time with respect to $n$ (which we
can see by viewing the wall time for different $T$).
In Fig.~\ref{fig:runningExample02},
we draw T samples $\{\theta_t\}_{t=1}^T \sim p_s(\theta)$, 
compute a sample estimate 
$\hat{\mu}_h^\text{PS} = \frac{1}{T}\sum_{t=1}^T \theta_t$,
and compare it with the true value $\mu_h$.
We see that $\hat{\mu}_h^\text{PS}$ converges to
$\mu_h$ after a relatively small number of samples T.

\subsection{Prior Swapping with False Posterior Samples}
\label{sampleBasedPS}
The previous method is only applicable if 
our false posterior inference result 
is a PDF $\tilde{p}_f(\theta)$ (such as 
in closed-form inference or variational approximations). 
Here, we develop
prior swapping methods for the setting where 
we only have access to
samples $\{\tilde{\theta}_t\}_{t=1}^{T_f} \sim p_f(\theta|x^n)$.
We propose the following procedure:
\begin{enumerate}[topsep=0pt,itemsep=0pt]
    \item Use $\{ \tilde{\theta}_t \}_{t=1}^{T_f}$ to form
    an estimate $\tilde{p}_f(\theta) \approx p_f(\theta|x^n)$.
    \item  Sample from 
    $p_s(\theta) \propto 
    \frac{\pi(\theta)\tilde{p}_f(\theta)}{\pi_f(\theta)}$ 
    with prior swapping, as before.
\end{enumerate}
Note that, in general, $p_s(\theta)$ only approximates 
$p(\theta|x^n)$. As a final step, after sampling
from $p_s(\theta)$, we can:
\begin{enumerate}[topsep=0pt,itemsep=0pt,start=3]
    \item Apply a correction to samples from $p_s(\theta)$.
\end{enumerate}

We will describe two methods for applying a correction to 
$p_s$ samples---one involving importance sampling, and one 
involving semiparametric density estimation.
Additionally, we will discuss forms for $\tilde{p}_f(\theta)$,
guarantees about these forms, and how to optimize the 
choice of $\tilde{p}_f(\theta)$.
In particular, we will argue why (in constrast to the
initial IS strategy) these methods do not fail when 
$p(\theta|x^n)$ and $p_f(\theta|x^n)$ are
very dissimilar or have ill-matching parametric forms.

\paragraph{Prior swap importance sampling.}
Our first proposal for applying a correction 
to prior swap samples involves IS:
after estimating some $\tilde{p}_f(\theta)$, and
 sampling $\{\theta_t\}_{t=1}^T \sim p_s(\theta)$,
we can treat $\{\theta_t\}_{t=1}^T$ as importance samples, 
and compute the IS estimate
\begin{align}
    \hat{\mu}_h^\text{PSis} = 
    \sum_{t=1}^T w(\theta_t) h(\theta_t)
\end{align}
where the weight function is now
\begin{align}
    \label{eq:newWeightFunction}
    w(\theta) \propto \frac{p(\theta|x^n)}{p_s(\theta)}
    \propto \frac{p_f(\theta|x^n)}{\tilde{p}_f(\theta)}
\end{align}
and the weights are normalized so that 
$\sum_{t=1}^T w(\theta_t) = 1$.

The key difference between this and the previous 
IS strategy is the weight function.
Recall that, previously, an accurate estimate depended
on the similarity between $\pi(\theta)$ and $\pi_f(\theta)$;
both the distance to and parametric form of $\pi(\theta)$ 
could produce high or infinite variance estimates.
This was an issue because we wanted the 
procedure to work well for any $\pi(\theta)$.
Now, however, the performance depends on the
similarity between $\tilde{p}_f(\theta)$ and 
$p_f(\theta|x^n)$---and by using the false posterior
samples, we can estimate a $\tilde{p}_f(\theta)$
that well approximates $p_f(\theta|x^n)$.
Additionally, we can prove that certain
choices of $\tilde{p}_f(\theta)$
\emph{guarantee} a finite variance IS estimate.
Note that the variance of $\hat{\mu}_h^\text{PSis}$
is only finite if 
\begin{align*}
    \mathbb{E}_{p_f}\left[h(\theta)^2
    \frac{p_f(\theta|x^n)^2}{\tilde{p}_f(\theta)^2} \right]
    \propto
    \mathbb{E}_p\left[h(\theta)^2
    \frac{p_f(\theta|x^n)}{\tilde{p}_f(\theta)} \right]
     < \infty.
\end{align*}
To bound this, it is sufficient to show that 
there exists $M \in \mathbb{R}$ such that 
$\frac{p_f(\theta|x^n)}{\tilde{p}_f(\theta)} < M$
for all $\theta$ (assuming a test
function $h(\theta)$ with finite variance) 
\cite{geweke1989bayesian}.
To satisfy this condition, we will propose
a certain parametric family $\tilde{p}_f^\alpha(\theta)$.
Note that, to maintain a prior swapping procedure with
$O(1)$ cost, we want a $\tilde{p}_f^\alpha(\theta)$ that can
be evaluated in constant time. In general, a 
$\tilde{p}_f^\alpha(\theta)$ with fewer terms will 
yield a faster procedure. With these in mind,
we propose the following family of densities.

\textbf{Definition.} For a parameter 
$\alpha = (\alpha_1,\ldots,\alpha_k)$, 
$\alpha_j \in \mathbb{R}^p$,
$k>0$, let density $\tilde{p}_f^\alpha(\theta)$ satisfy
\begin{align}
    \label{eq:paraFalsePost}
    \tilde{p}_f^\alpha(\theta)
    \propto
    \pi_f(\theta)
    \prod_{j=1}^k p(\alpha_j|\theta)^{n/k}
\end{align}
where $p(\alpha_j|\theta)$ denotes the model 
conditional PDF.

The number of terms in $\tilde{p}_f^\alpha(\theta)$ (and 
cost to evaluate) is determined by the parameter $k$.
Note that this family is inspired by the true form 
of the false posterior $p_f(\theta|x^n)$.
However, $\tilde{p}_f^\alpha(\theta)$ has constant-time
evaluation, and we can estimate its parameter $\alpha$
using samples 
$\{\tilde{\theta}_t\}_{t=1}^{T_f} \sim p_f(\theta|x^n)$.
Furthermore, we have the following guarantees.


\begin{theorem}
For any $\alpha = (\alpha_1,\ldots,\alpha_k) \subset \mathbb{R}^p$
and $k>0$ let 
$\tilde{p}_f^\alpha(\theta)$ be defined 
as in Eq.~(\ref{eq:paraFalsePost}).
Then, there exists $M>0$ such that 
$\frac{p_f(\theta|x^n)}{\tilde{p}_f^\alpha(\theta)} < M$, for all $\theta \in \mathbb{R}^d$.
\end{theorem}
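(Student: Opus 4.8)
The plan is to strip off the prior and the normalizing constants so the claim becomes a pointwise bound on a pure likelihood ratio, and then to bound that ratio by continuity on compact sets together with a tail estimate. First I would write both densities with explicit normalizers, $Z_f=\int\pi_f(\theta)L(\theta|x^n)\,d\theta$ and $Z_\alpha=\int\pi_f(\theta)\prod_{j=1}^k p(\alpha_j|\theta)^{n/k}\,d\theta$ (both finite and strictly positive under the standing assumption that $p_f(\cdot|x^n)$ and $\tilde{p}_f^\alpha$ are proper densities), and use $L(\theta|x^n)=p(x^n|\theta)=\prod_{i=1}^n p(x_i|\theta)$ to get
\begin{align*}
\frac{p_f(\theta|x^n)}{\tilde{p}_f^\alpha(\theta)}
 = \frac{Z_\alpha}{Z_f}\cdot\frac{\pi_f(\theta)\prod_{i=1}^n p(x_i|\theta)}{\pi_f(\theta)\prod_{j=1}^k p(\alpha_j|\theta)^{n/k}}
 = \frac{Z_\alpha}{Z_f}\,g(\theta),\qquad g(\theta):=\frac{\prod_{i=1}^n p(x_i|\theta)}{\prod_{j=1}^k p(\alpha_j|\theta)^{n/k}}.
\end{align*}
Cancelling $\pi_f(\theta)$ is legitimate wherever $\pi_f(\theta)>0$, and where $\pi_f(\theta)=0$ both densities vanish, so it suffices to bound $g$ and then take $M=1+(Z_\alpha/Z_f)\sup_\theta g(\theta)$.

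Next I would bound $g$ by splitting $\mathbb{R}^d$ into a large closed ball and its complement. On the ball, $g$ is a ratio of continuous strictly positive functions of $\theta$, hence continuous, hence bounded. The real content is $\limsup_{\|\theta\|\to\infty}g(\theta)<\infty$, which I would attack via $\log g(\theta)=\sum_{i=1}^n\log p(x_i|\theta)-\tfrac{n}{k}\sum_{j=1}^k\log p(\alpha_j|\theta)$: the family $\tilde{p}_f^\alpha$ is built to carry the same total observation weight $n$ as the false likelihood ($k$ pseudo-points, each raised to the power $n/k$), so the leading $\|\theta\|$-growth of the two sums agrees and cancels, and what remains must be shown to stay bounded from above. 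For an exponential-family likelihood this is transparent --- the $-nA(\theta)$ log-partition terms cancel exactly and $\log g(\theta)$ collapses to an affine function of the natural parameter $\eta(\theta)$ plus a constant, so boundedness reduces to the geometry of the natural-parameter image.

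The step I expect to be the main obstacle is exactly this tail estimate: showing that the post-cancellation remainder of $\log g$ is bounded \emph{above} uniformly over the admissible $\alpha$, not merely that it is of lower order than the leading growth. I would therefore pin down the model hypotheses that make it go through --- for instance a per-datum density bounded in $\theta$ and a coercivity/regularity condition on $-\log p(\cdot|\theta)$ as $\|\theta\|\to\infty$ --- and then assemble $M$ from the compact-ball bound and the tail bound. Everything else (the normalizer bookkeeping, the cancellation of $\pi_f$, and the continuity argument on compacts) is routine.
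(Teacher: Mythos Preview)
Your reduction to bounding the pure likelihood ratio $g(\theta)=\prod_{i=1}^n p(x_i|\theta)\big/\prod_{j=1}^k p(\alpha_j|\theta)^{n/k}$ matches the paper exactly, including the cancellation of $\pi_f$ and the handling of the normalizing constants. But your plan for bounding $g$ diverges from the paper and, as you yourself flag, leaves the tail estimate open. Worse, your own exponential-family heuristic actually exposes a genuine obstruction rather than resolving it: for a canonical family with natural parameter $\eta(\theta)$ and sufficient statistic $T$, one gets $\log g(\theta)=\eta(\theta)^\top\bigl[\sum_i T(x_i)-\tfrac{n}{k}\sum_j T(\alpha_j)\bigr]+\text{const}$, and this is unbounded above whenever the bracket is nonzero and the natural-parameter image is unbounded in that direction. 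In a Gaussian location model, for instance, $\log g(\theta)=\theta\bigl[\sum_i x_i-\tfrac{n}{k}\sum_j\alpha_j\bigr]+\text{const}$, which blows up for generic $\alpha$. So without a further structural hypothesis the theorem is simply false, and no compact-plus-tail decomposition in $\theta$ can repair it.

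The paper supplies exactly such a hypothesis, but in the $x$-variable rather than in $\theta$: it assumes that $x\mapsto\log p(x|\theta)$ is $L$-Lipschitz on a set $\mathcal{Z}$ containing both the data and the pseudo-data $\alpha$, with the same constant $L$ for every $\theta$. With $d=\max_{i,j}|x_i-\alpha_j|$ this yields $|\log p(x_i|\theta)-\log p(\alpha_j|\theta)|\le Ld$ for all $i,j$ and all $\theta$ simultaneously, and writing $\log g(\theta)=\sum_{i=1}^n\tfrac{1}{k}\sum_{j=1}^k\bigl[\log p(x_i|\theta)-\log p(\alpha_j|\theta)\bigr]$ gives $\log g(\theta)\le nLd$ uniformly. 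No splitting of $\mathbb{R}^d$ is needed; the bound is immediate. The key idea you are missing is that the relevant regularity is in the data argument (with a $\theta$-uniform constant), not in $\theta$; this is precisely what makes the ratio bounded for \emph{arbitrary} $\alpha$, and it also rules out models like the Gaussian location family for which the statement would otherwise fail.
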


\begin{corollary}
For $\{\theta_t\}_{t=1}^T \sim p_s^\alpha(\theta) \propto 
\frac{\tilde{p}_f^\alpha(\theta) \pi(\theta)}{\pi_f(\theta)}$,
$w(\theta_t) = 
\frac{p_f(\theta_t|x^n)}{\tilde{p}_f^\alpha(\theta_t)}
\left(\sum_{r=1}^T
\frac{p_f(\theta_r|x^n)}{\tilde{p}_f^\alpha(\theta_r)}
\right)^{-1}$, and 
test function that satisfies $\text{Var}_p\left[h(\theta)\right] < \infty$,
the variance of IS estimate $\hat{\mu}_h^\text{PSis} = 
\sum_{t=1}^T h(\theta_t) w(\theta_t)$
is finite.
\end{corollary}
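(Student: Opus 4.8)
The plan is to reduce finiteness of $\Var[\hat{\mu}_h^{\text{PSis}}]$ to a single integral against the target posterior and then bound that integral with the Theorem.

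First I would record that $\hat{\mu}_h^{\text{PSis}}=\sum_{t=1}^T h(\theta_t)w(\theta_t)$ is a \emph{self-normalized} IS estimate with proposal $p_s^\alpha$ and unnormalized weights $\tilde{w}(\theta)=p_f(\theta|x^n)/\tilde{p}_f^\alpha(\theta)$. By the standard delta-method expansion for self-normalized IS, its asymptotic variance is a constant multiple of $\mathbb{E}_{p_s^\alpha}\!\big[\tilde{w}(\theta)^2(h(\theta)-\mu_h)^2\big]\,/\,\big(\mathbb{E}_{p_s^\alpha}[\tilde{w}(\theta)]\big)^2$. The denominator is finite and strictly positive, since $\tilde{w}>0$ and $\tilde{w}(\theta)\,p_s^\alpha(\theta)\propto \pi(\theta)p_f(\theta|x^n)/\pi_f(\theta)\propto p(\theta|x^n)$ integrates to a finite constant. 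So it suffices to control $\mathbb{E}_{p_s^\alpha}[\tilde{w}(\theta)^2 h(\theta)^2]$, and using the same proportionality $\tilde{w}\,p_s^\alpha\propto p$ this equals a constant times $\mathbb{E}_p\!\big[\tfrac{p_f(\theta|x^n)}{\tilde{p}_f^\alpha(\theta)}h(\theta)^2\big]$, exactly the quantity flagged in the discussion preceding the statement.

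Next I would invoke the Theorem: there is $M>0$ with $p_f(\theta|x^n)/\tilde{p}_f^\alpha(\theta)<M$ for all $\theta\in\mathbb{R}^d$, hence $\mathbb{E}_p\!\big[\tfrac{p_f(\theta|x^n)}{\tilde{p}_f^\alpha(\theta)}h(\theta)^2\big]\le M\,\mathbb{E}_p[h(\theta)^2]=M\big(\Var_p[h(\theta)]+\mathbb{E}_p[h(\theta)]^2\big)$. This is finite because $\Var_p[h(\theta)]<\infty$ means $h\in L^2(p)$, which (as $p$ is a probability measure, by Cauchy--Schwarz) also makes $\mathbb{E}_p[h(\theta)]$ finite. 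Therefore $\Var[\hat{\mu}_h^{\text{PSis}}]$ is finite.

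The only substantive input is the Theorem; the rest is bookkeeping. The point to be careful about is that the proposal here is $p_s^\alpha$, not the false posterior $p_f$, so the relevant second moment must be transported to the target posterior $p$ via $\tilde{w}\,p_s^\alpha\propto p$ before the uniform bound of the Theorem can be applied --- and self-normalization requires the (immediate) check that $\mathbb{E}_{p_s^\alpha}[\tilde{w}]$ is finite and nonzero. I do not expect any further obstacle: all the difficulty of the construction was absorbed into the Theorem.
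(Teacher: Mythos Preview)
Your proposal is correct and follows essentially the same approach as the paper: the key input is the uniform bound of Theorem~2.1, and finiteness of the IS variance then follows from standard sufficient conditions. The paper's own proof is a one-line appeal to \cite{geweke1989bayesian} for those conditions, whereas you have unpacked the argument by writing out the self-normalized IS second moment, transporting it to $p$ via $\tilde{w}\,p_s^\alpha\propto p$, and applying the bound $M$ directly.
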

Proofs for these theorems are given in the appendix.

Note that we do not know the normalization constant
for $\tilde{p}_f^\alpha(\theta)$. This 
is not an issue for its use in prior swapping, since
we only need access to a function proportional
to $p_s^\alpha(\theta) \propto 
\tilde{p}_f^\alpha(\theta) \pi(\theta) \pi_f(\theta)^{-1}$
in most MCMC algorithms.
However, we still need to estimate $\alpha$, which \emph{is} 
an issue because the unknown normalization constant
is a function of $\alpha$.
Fortunately, we can use the method of \emph{score matching}
\cite{hyvarinen2005estimation} 
to estimate $\alpha$ given a density such 
as $\tilde{p}_f^\alpha(\theta)$ with unknown 
normalization constant.

Once we have found an optimal parameter $\alpha^*$, 
we draw samples from $p_s^{\alpha^*}(\theta)$ $\propto$ $\tilde{p}_f^{\alpha^*}(\theta)
\pi(\theta) \pi_f(\theta)^{-1}$, compute
weights for these samples (Eq.~(\ref{eq:newWeightFunction})), 
and compute the IS estimate $\hat{\mu}_h^\text{PSis}$.
We give pseudocode for the full prior swap importance 
sampling procedure in Alg.~\ref{alg:priorSwapIS}.

\begin{algorithm}[h]
\caption{Prior Swap Importance Sampling}
\label{alg:priorSwapIS}
    \KwIn{False posterior samples 
    $\{ \tilde{\theta}_t \}_{t=1}^{T_f} \sim 
    p_f(\theta | x^n)$.}
    \KwOut{IS estimate $\hat{\mu}_h^\text{PSis}$.}
    \vspace{2pt}
    Score matching:
    estimate $\alpha^*$ using 
    $\{ \tilde{\theta}_t \}_{t=1}^{T_f}$.\\
    Prior swapping: sample
    $\{\theta_t\}_{t=1}^T \sim p_s^{\alpha^*}(\theta) \propto 
    \frac{\tilde{p}_f^{\alpha^*}(\theta)\pi(\theta)}{\pi_f(\theta)}$.\\
    Importance sampling: compute $\hat{\mu}_h^\text{PSis}
    = \sum_{t=1}^T h(\theta_t)w(\theta_t)$.
    %
    %
\end{algorithm}

\paragraph{Semiparametric prior swapping.}
In the previous method, we chose a parametric form for 
$\tilde{p}_f^\alpha(\theta)$; in general, even 
the optimal $\alpha$ will yield
an inexact approximation to $p_f(\theta|x^n)$.
Here, we aim to incorporate methods that
return an increasingly exact estimate 
$\tilde{p}_f(\theta)$ when given more 
false posterior samples $\{\tilde{\theta}_t\}_{t=1}^{T_f}$.

One idea is to use a nonparametric kernel
density estimate $\tilde{p}_f^{np}(\theta)$
and plug this into $p_s^{np}(\theta) \propto
\tilde{p}_f^{np}(\theta) \pi(\theta) \pi_f(\theta)^{-1}$.
However, nonparametric density estimates can yield 
inaccurate density tails and fare badly 
in high dimensions. To help mitigate these problems, 
we turn to a semiparametric estimate, which begins 
with a parametric estimate, and adjusts it as 
samples are generated. 
In particular, we use a density estimate
that can be viewed as the product of a parametric density estimate
and a nonparametric correction function \cite{hjort1995nonparametric}. 
This density estimate is consistent as the 
number of samples $T_f \rightarrow \infty$.
Instead of (or in addition to) correcting 
prior swap samples with importance sampling,
we can correct them by updating the 
nonparametric correction function as we
continue to generate false posterior samples. 

Given $T_f$ samples $\{\tilde{\theta}_t\}_{t=1}^{T_f}$ 
$\sim$ $p_f(\theta|x^n)$, we write the semiparametric
false posterior estimate as
\begin{align}
    \label{eq:semiparFP}
    \tilde{p}_f^{sp}(\theta) 
    = \frac{1}{T_f} \sum_{t=1}^{T_f}
    \left[ \frac{1}{b^d}
    K \left(\frac{\|\theta - \widetilde{\theta}_t\|}{b}\right)
    \frac{\tilde{p}_f^\alpha(\theta)}
        {\tilde{p}_f^\alpha(\tilde{\theta}_t)}
    \right] ,
\end{align}
where $K$ denotes a probability density kernel,
with bandwidth $b$, where $b \rightarrow 0$
as $T_f \rightarrow \infty$ (see \citep{wasserman2006all} 
for details on probability density kernels 
and bandwidth selection). 
The semiparametric prior swap density is then
\begin{align}
    \label{eq:semiparEst}
    p_s^{sp}(\theta) 
    \propto
    &\frac{\tilde{p}_f^{sp}(\theta) \pi(\theta)}
        {\pi_f(\theta)}
    =
    \frac{1}{T_f} \sum_{t=1}^{T_f}
    \frac{
    K \left(\frac{\|\theta - \tilde{\theta}_t\|}{b}\right) 
    \tilde{p}_f^\alpha(\theta)  \pi(\theta) }
    { \tilde{p}_f^\alpha(\tilde{\theta}_t) \pi_f(\theta) b^d} 
    \nonumber \\
    \propto
    & \left[ p_s^\alpha(\theta) \right]
    \left[   \frac{1}{T_f} \sum_{t=1}^{T_f}
    \frac{K \left(\frac{\|\theta - \tilde{\theta}_t\|}{b}\right)}
    {\tilde{p}_f^\alpha(\tilde{\theta}_t)}
    \right].
\end{align}
Hence, the prior swap density $p_s^{sp}(\theta)$ is 
proportional to the product of two densities: 
the parametric prior swap density $p_s^\alpha(\theta)$, 
and a correction density.
To estimate expectations with 
respect to $p_s^{sp}(\theta)$, we can 
follow Alg.~\ref{alg:priorSwapIS} as before,
but replace the weight function in the final IS estimate with
\begin{align}
    w(\theta) \propto \frac{p_s^{sp}(\theta)}{p_s^\alpha(\theta)}
    \propto
    \frac{1}{T_f} \sum_{t=1}^{T_f} 
    \frac{K \left(\frac{\|\theta - \tilde{\theta}_t\|}{b}\right)}
    {\tilde{p}_f^\alpha(\tilde{\theta}_t)}.
\end{align}

One advantage of this strategy is that 
computing the weights doesn't require the data---it 
thus has constant cost with respect to data size $n$
(though its cost does increase with the 
number of false posterior samples $T_f$).
Additionally, as in importance sampling,
we can prove that this procedure 
yields an exact estimate of $\mathbb{E}[h(\theta)]$, 
asymptotically, as $T_f \rightarrow \infty$ 
(and we can provide an explicit bound on
the rate at which $p_s^{sp}(\theta)$ converges to $p(\theta|x^n)$).
We do this by showing that
$p_s^{sp}(\theta)$ is consistent for $p(\theta|x^n)$.

\begin{theorem}
Given false posterior samples 
$\{\tilde{\theta}_t\}_{t=1}^{T_f} \sim p_f(\theta|x^n)$
and $b \asymp T_f^{-1/(4+d)}$, the estimator
$p_s^{sp}$ is 
consistent for $p(\theta|x^n)$, 
i.e. its mean-squared error satisfies
\begin{align}
    \sup_{ p(\theta|x^n) } \hspace{2mm}
    \mathbb{E}\left[ \int\left(
            p_s^{sp}(\theta) - 
            p(\theta|x^n)\right)^2 
            d\theta \right]
    < \frac{c}{T_f^{4/(4+d)}} \nonumber
\end{align}
for some $c>0$ and $0<b\leq 1$.
\end{theorem}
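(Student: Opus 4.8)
The plan is to recognize $\tilde p_f^{sp}$ as a known semiparametric density estimator, import its mean‑integrated‑squared‑error (MISE) rate for the density $p_f(\theta|x^n)$ that the samples $\{\tilde\theta_t\}$ come from, and then push that rate through the fixed deterministic reweighting by $g(\theta):=\pi(\theta)/\pi_f(\theta)$ (and its random renormalization) to reach $p_s^{sp}$ versus $p(\theta|x^n)$. Note $p(\theta|x^n) = p_f(\theta|x^n)\,g(\theta)/Z_0$ with $Z_0 = \int p_f(\theta|x^n)g(\theta)\,d\theta$, and $p_s^{sp}(\theta) = \tilde p_f^{sp}(\theta)\,g(\theta)/\hat Z$ with $\hat Z = \int \tilde p_f^{sp}(\theta)g(\theta)\,d\theta$. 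From Eq.~(\ref{eq:semiparFP}), $\tilde p_f^{sp}(\theta) = \tilde p_f^\alpha(\theta)\,\hat r(\theta)$ with $\hat r(\theta) = \frac{1}{T_f}\sum_{t=1}^{T_f} b^{-d} K(\|\theta-\tilde\theta_t\|/b)/\tilde p_f^\alpha(\tilde\theta_t)$, and $\hat r$ is a kernel estimate (built from the $p_f$‑samples) of the correction function $r(\theta):= p_f(\theta|x^n)/\tilde p_f^\alpha(\theta)$, since its expectation under $\tilde\theta\sim p_f(\cdot|x^n)$ is the kernel smoothing of $r$, which tends to $r$ as $b\to 0$. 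Thus $\tilde p_f^{sp}$ is exactly the semiparametric density estimator ``with parametric start'' $\tilde p_f^\alpha$ \cite{hjort1995nonparametric} for the density $p_f(\cdot|x^n)$.

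First I would invoke the MISE analysis of that estimator. Under the standing regularity conditions made explicit — $K$ a bounded symmetric second‑order probability kernel; a bounded (compact) parameter domain on which $\tilde p_f^\alpha$ is bounded above and bounded away from $0$ and on which $\pi/\pi_f$ is bounded above and below; and the correction function $r$ twice continuously differentiable with second derivatives bounded uniformly over the smoothness class of target posteriors indexed by the supremum — the bias/variance decomposition of $\hat r$ gives squared bias $O(b^4)$ and variance $O((T_f b^d)^{-1})$, and boundedness of $\tilde p_f^\alpha$ carries the same orders to $\tilde p_f^{sp} - p_f$. Balancing with $b\asymp T_f^{-1/(4+d)}$ (and using $0<b\le 1$ to absorb lower‑order terms) yields
\[
    \sup_{p(\theta|x^n)}\ \mathbb{E}\!\left[\int \big(\tilde p_f^{sp}(\theta) - p_f(\theta|x^n)\big)^2 d\theta\right]\ \le\ \frac{c_1}{T_f^{4/(4+d)}} .
\]
The uniform boundedness of $r$ needed here is supplied for free by the earlier Theorem ($r(\theta) = p_f(\theta|x^n)/\tilde p_f^\alpha(\theta) < M$), which also keeps $Z_0$ bounded and positive.

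Next I would transfer this bound to $p_s^{sp}$. With $\Delta := \tilde p_f^{sp} - p_f(\cdot|x^n)$ and $Z_0-\hat Z = -\int g(\theta)\Delta(\theta)\,d\theta$, adding and subtracting gives
\[
    p_s^{sp}(\theta) - p(\theta|x^n)\ =\ \frac{g(\theta)}{\hat Z}\,\Delta(\theta)\ +\ \frac{Z_0-\hat Z}{\hat Z}\,p(\theta|x^n) .
\]
Using $(a+b)^2\le 2a^2+2b^2$, integrating, bounding $g$ above and below by positive constants on the domain (so that $\hat Z$ is itself bounded away from $0$, using $K\ge 0$), applying Cauchy--Schwarz to $\int g\Delta$, and noting $\int p(\cdot|x^n)^2$ is uniformly bounded, one obtains $\int (p_s^{sp}-p(\cdot|x^n))^2\,d\theta \le C\int \Delta^2\,d\theta$ with $C$ depending only on the standing bounds. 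Taking expectations, supping over the class, and inserting the displayed MISE bound gives the claimed rate $c/T_f^{4/(4+d)}$.

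The step I expect to be the main obstacle is making the imported MISE bound genuinely \emph{uniform} over the class of posteriors the supremum ranges over: one must pin down that class precisely and verify that every constant entering the Hjort--Glad analysis (the smoothness of $r$, the sandwich bounds on $\tilde p_f^\alpha$, the lower bound on $Z_0$, and the bandwidth regime) is uniform over it, tracking in particular how the earlier Theorem's $M$ and the regularity of $\pi/\pi_f$ propagate into those constants. A secondary technical nuisance is handling the data‑dependent normalizer $\hat Z$ \emph{inside} an expectation rather than only in probability; the clean route is the deterministic lower bound on $\hat Z$ used above (valid on a compact domain with a compactly supported kernel, up to boundary corrections), failing which one needs a truncation argument together with a moment bound on $\hat Z^{-1}$.
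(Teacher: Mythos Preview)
Your proposal is correct and follows the same high-level strategy as the paper: import the Hjort--Glad bias/variance rates for the semiparametric estimator $\tilde p_f^{sp}$ of $p_f(\theta\mid x^n)$, then transfer them through the fixed ratio $g=\pi/\pi_f$ to obtain the rate for $p_s^{sp}$ against $p(\theta\mid x^n)$.

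The execution differs in one respect worth noting. The paper works pointwise: it writes $|\mathbb{E}[p_s^{sp}(\theta)]-p(\theta\mid x^n)|$ and $\Var[p_s^{sp}(\theta)]$ each as a constant times the corresponding quantity for $\tilde p_f^{sp}$ versus $p_f$, absorbing the normalization into deterministic constants $c_1,c_2,c_3$, and then integrates the resulting $O(b^4)+O((T_f b^d)^{-1})$ bound. Your route instead bounds the integrated squared error directly via the decomposition $p_s^{sp}-p = (g/\hat Z)\Delta + ((Z_0-\hat Z)/\hat Z)\,p$, which makes the data-dependent normalizer $\hat Z$ explicit and forces you to control $\hat Z^{-1}$ inside the expectation (the ``secondary nuisance'' you flag). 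This buys you a genuinely rigorous treatment of a point the paper's proof glosses over---treating a random normalizer as a fixed constant---at the cost of the extra Cauchy--Schwarz and lower-bound-on-$\hat Z$ machinery. Your identification of the uniformity of constants over the sup class as the main obstacle is also apt; the paper simply asserts the sup bound without tracking this.
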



\noindent The proof for this theorem is given in the appendix.


\section{Empirical Results}
\label{empiricalResults}
We show empirical results on
Bayesian generalized linear models (including linear 
and logistic regression) with sparsity 
and heavy tailed priors, and on latent factor models (including 
mixture models and topic models) with relational priors over factors
(e.g. diversity-encouraging, agglomerate-encouraging, etc.). 
We aim to demonstrate empirically that prior swapping 
efficiently yields correct samples and, in some cases,
allows us to apply certain inference algorithms to 
more-complex models than was previously possible.
In the following experiments, we will refer to the following 
procedures:
\begin{itemize}[topsep=1mm,itemsep=0mm]
    \item \textbf{Target posterior inference:} 
    some standard inference algorithm (e.g. MCMC) 
    run on $p(\theta|x^n)$.
    
    \item \textbf{False posterior inference:} 
    some standard inference algorithm run on 
    $p_f(\theta|x^n)$.
    
    \item \textbf{False posterior IS:} 
    IS using samples from $p_f(\theta|x^n)$.
    
    \item \textbf{Prior swap exact:} 
    prior swapping with closed-form 
    $\tilde{p}_f(\theta) = p_f(\theta|x^n)$.
    
    \item \textbf{Prior swap parametric:} 
    prior swapping with parametric 
    $\tilde{p}_f^\alpha(\theta)$ given 
    by Eq.~(\ref{eq:paraFalsePost}).
    
    \item \textbf{Prior swap IS:} 
    correcting samples from 
    $\tilde{p}_f^\alpha(\theta)$ with IS.
    
    \item \textbf{Prior swap semiparametric:} 
    correcting samples from 
    $\tilde{p}_f^\alpha(\theta)$ with the
    semiparametric estimate IS procedure.
    
\end{itemize}

To assess performance, we choose a 
test function $h(\theta)$, and compute
the Euclidean distance between
$\mu_h = \mathbb{E}_p[h(\theta)]$
and some estimate $\hat{\mu}_h$ returned by 
a procedure. We denote this performance 
metric by
\emph{posterior error} $= 
\|\mu_h - \hat{\mu}_h\|_2$.
Since $\mu_h$ is typically not available
analytically, we run a single chain 
of MCMC on the target posterior for one million
steps, and use these samples as ground truth
to compute $\mu_h$.
For timing plots, to assess error of a method
at a given time point, we collect samples
drawn before this time point, remove the first quarter
as burn in, and add the time it takes to compute 
any of the corrections.

\subsection{Sparsity Inducing and Heavy Tailed Priors 
in Bayesian Generalized Linear Models}
Sparsity-encouraging regularizers
have gained a high level of popularity over the past
decade due to their ability to produce 
models with greater interpretability and parsimony. For example,
the $L_1$ norm has been used to induce sparsity 
with great effect \citep{tibshirani1996regression}, and has been shown 
to be equivalent to a mean-zero independent Laplace prior 
\citep{tibshirani1996regression,seeger2008bayesian}.
In a Bayesian setting, inference given 
a sparsity prior can be difficult, and often 
requires a computationally intensive method (such as
MH or HMC) or posterior approximations 
(e.g. expectation propagation
\citep{minka2001expectation})
that make factorization or parametric assumptions
\citep{seeger2008bayesian,gerwinn2010bayesian}.
We propose a cheap yet accurate solution:
first get an inference result with a more-tractable prior (such
as a normal prior), and then use prior swapping
to quickly convert the result to the posterior given a 
sparsity prior.

Our first set of experiments are on 
Bayesian linear regression models, 
which we can write as
$
    y_i = X_i\theta + \epsilon, \hspace{1mm}
    \epsilon \sim \mathcal{N}(0,\sigma^2), \hspace{1mm}
    \theta \sim \pi, \hspace{1mm}
    i=1\text{,...,}n
$.
For $\pi$, we compute results on
Laplace, Student's t, and VerySparse (with PDF 
$\text{VerySparse}(\sigma)$ $=$ 
$\prod_{i=1}^d  \frac{1}{2\sigma} \exp\{-|\theta_i|^{0.4} / \sigma \}$
\cite{seeger2008bayesian}) priors.
Here, a normal $\pi_f$ is conjugate and allows for 
exact false posterior inference.
Our second set of experiments are on 
Bayesian logistic regression models, which we write as 
$   y_i \sim \text{Bern}(p_i), \hspace{1mm}
    p_i = \text{logistic}(X_i\theta), \hspace{1mm}
    \theta \sim \pi, \hspace{1mm}
    i=1\text{,...,}n$.
which we will pair with both heavy tailed priors
and a hierarchical target prior
$ \pi = \mathcal{N}(0,\alpha^{-1}I), \hspace{1mm}
\alpha \sim \text{Gamma}(\gamma,1)$. 
For these experiments, we also use a normal $\pi_f$.
However, this false prior is no longer conjugate, and 
so we use MCMC to sample from $p_f(\theta|x^n).$

For linear regression, we use the 
YearPredictionMSD data set\footnote{\url{https://archive.ics.uci.edu/ml/datasets/YearPredictionMSD}}, 
($n=515345$, $d=90$), in which 
regression is used to predict the year
associated with a a song,
and for logistic regression we use 
the MiniBooNE particle identification data
set\footnote{\url{https://archive.ics.uci.edu/ml/datasets/MiniBooNE+particle+identification}},
($n=130065$, $d=50$),
in which binary classification is used to
distinguish particles.

In Fig.~\ref{fig:linRegLogReg}, we compare prior swapping 
and IS methods, in order to show that 
the prior swapping procedures yield accurate posterior 
estimates, and to compare their speeds of convergence.
We plot posterior error vs. wall time for each method's 
estimate of the posterior mean 
$\mathbb{E}_p[h(\theta)] = \mathbb{E}_p[\theta]$
for two sparsity target priors (Laplace and VerySparse),
for both linear and logistic regression.
In linear regression (only), since the normal 
conjugate $\pi_f$ allows us to compute a closed 
form $p_f(\theta|x^n)$, we can
run the \emph{prior swap exact} method,
where $\tilde{p}_f(\theta) = p_f(\theta|x^n)$.
However, we can also sample from 
$p_f(\theta|x^n)$ to compute 
$\tilde{p}_f^{\alpha^*}(\theta)$, 
and therefore compare methods such as
\emph{prior swap parametric} and
the two correction methods.
In logistic regression, we do not have a 
closed form $p_f(\theta|x^n)$; here, we
only compare the methods that make use of
samples from $p_f(\theta|x^n)$.
In Fig.~\ref{fig:linRegLogReg},
we see that the prior swapping methods 
(particularly prior swap IS) quickly
converge to nearly zero posterior error.
Additionally, in linear regression,
we see that prior swap parametric, 
using $\tilde{p}_f(\theta) = \tilde{p}_f^{\alpha^*}(\theta)$,
yields similar posterior error as prior swap exact, which 
uses $\tilde{p}_f(\theta) = p(\theta|x^n)$.

\begin{figure*}[!ht]
        \makebox[\textwidth][c]{
        \hspace{-5mm}
        \includegraphics[width=1\textwidth]{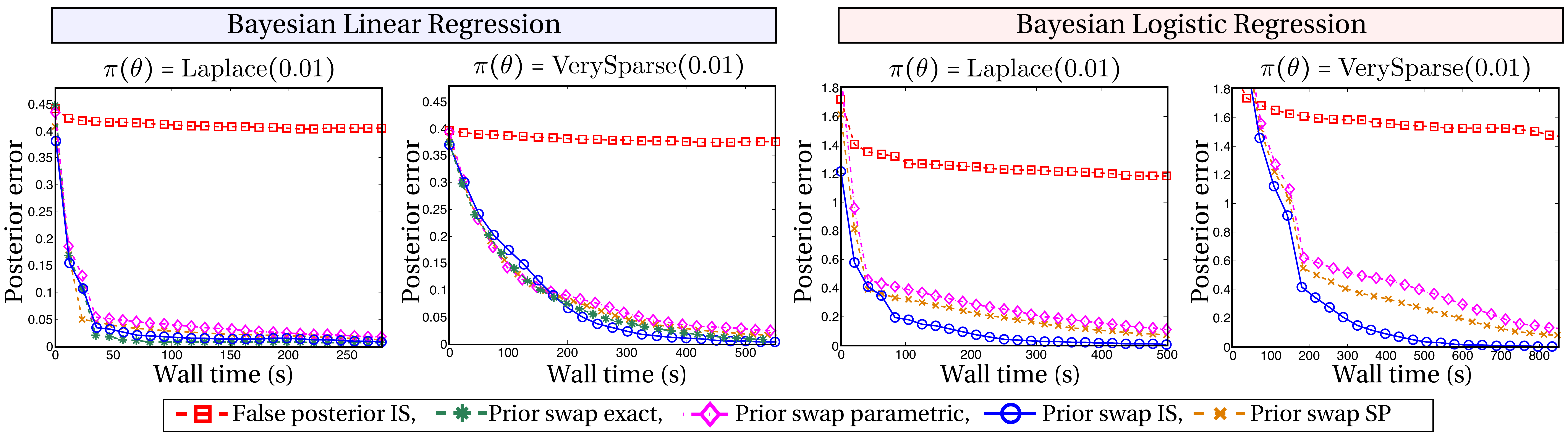}}
        \vspace{-3mm} \vspace*{-3mm}
        \caption{\label{fig:linRegLogReg} 
        Comparison of prior swapping and IS methods
        for Bayesian linear and logistic regression
        under Laplace and VerySparse target priors.
        The prior swapping methods (particularly prior swap exact 
        and prior swap IS) quickly converge to low posterior errors.
        }
\end{figure*}

\begin{figure*}[!ht]
        \makebox[\textwidth][c]{
        \includegraphics[width=1\textwidth]{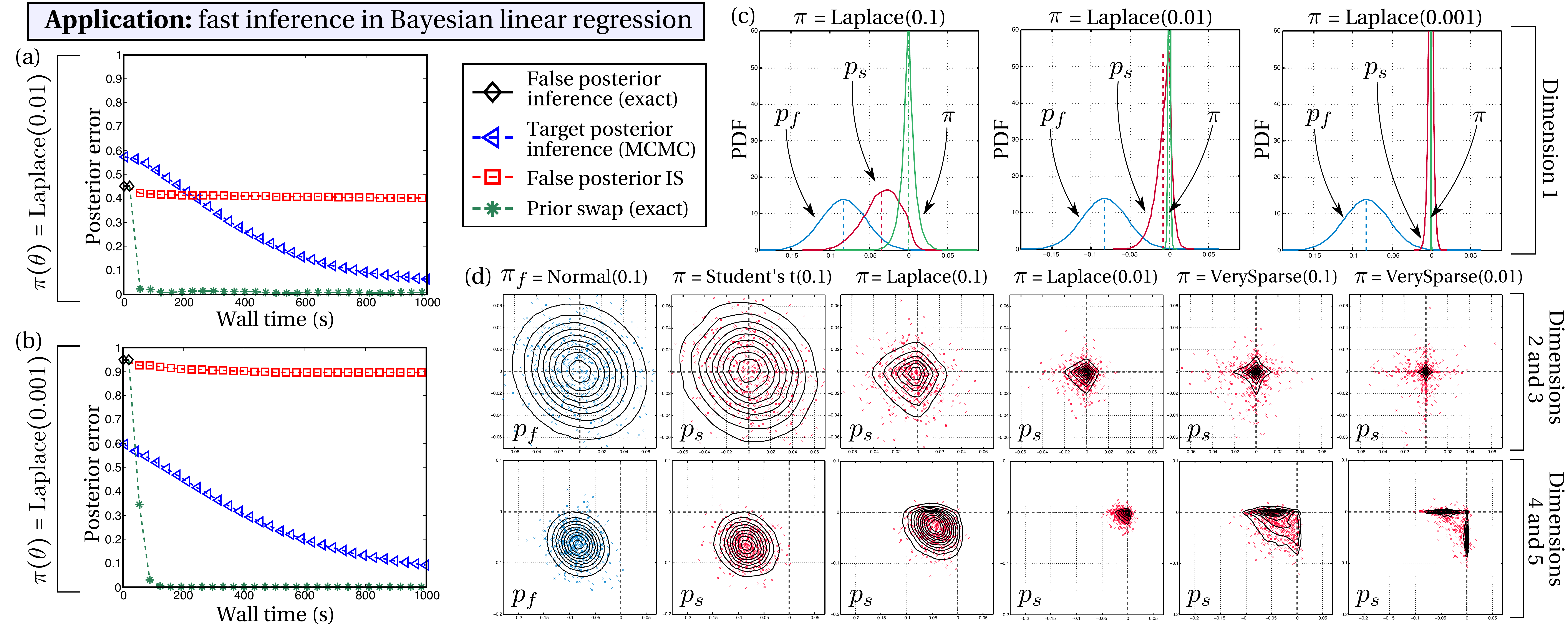}}
        \vspace{-2mm} \vspace*{-2mm}
        \caption{\label{fig:lmApp} 
        Prior swapping for fast inference in Bayesian linear
        models with sparsity and heavy-tailed priors:
        (a-b) Convergence plots showing that prior swapping 
        performs accurate inference faster than the comparison
        methods and is robust to changing $\pi$.
        (c) Inferred 1-d density marginals when prior 
        sparsity is increased. (d) Prior swapping results 
        for a variety of different sparsity priors.
        }
        \vspace{-2mm} \vspace*{-2mm}
\end{figure*}

In Fig.~\ref{fig:lmApp}, we show how prior 
swapping can be used for fast inference
in Bayesian linear models with sparsity
or heavy-tailed priors. 
We plot the time needed to first compute the 
false posterior (via exact inference) and then run 
prior swapping (via the MH procedure) on some 
target posterior, and compare this with the MH 
algorithm run directly on the target posterior.
In (a) and (b) we show convergence plots 
and see that prior swapping performs faster inference
(by a few orders of magnitude) than direct MH.
In plot (b) we reduce the variance of the target
prior; while this hurts the accuracy 
of false posterior IS, prior swapping
still quickly converges to zero error.
In (c) we show 1-d density marginals as we 
increase the prior sparsity, and in (d) we show
prior swapping results for various sparsity priors.

In the appendix, we also include results on
logistic regression with the hierarchical 
target prior, as well as results for synthetic 
data where we are able to compare timing and 
posterior error as we tune $n$ and $d$.


\begin{figure*}[!ht]
        \makebox[\textwidth][c]{\includegraphics[width=1\textwidth]{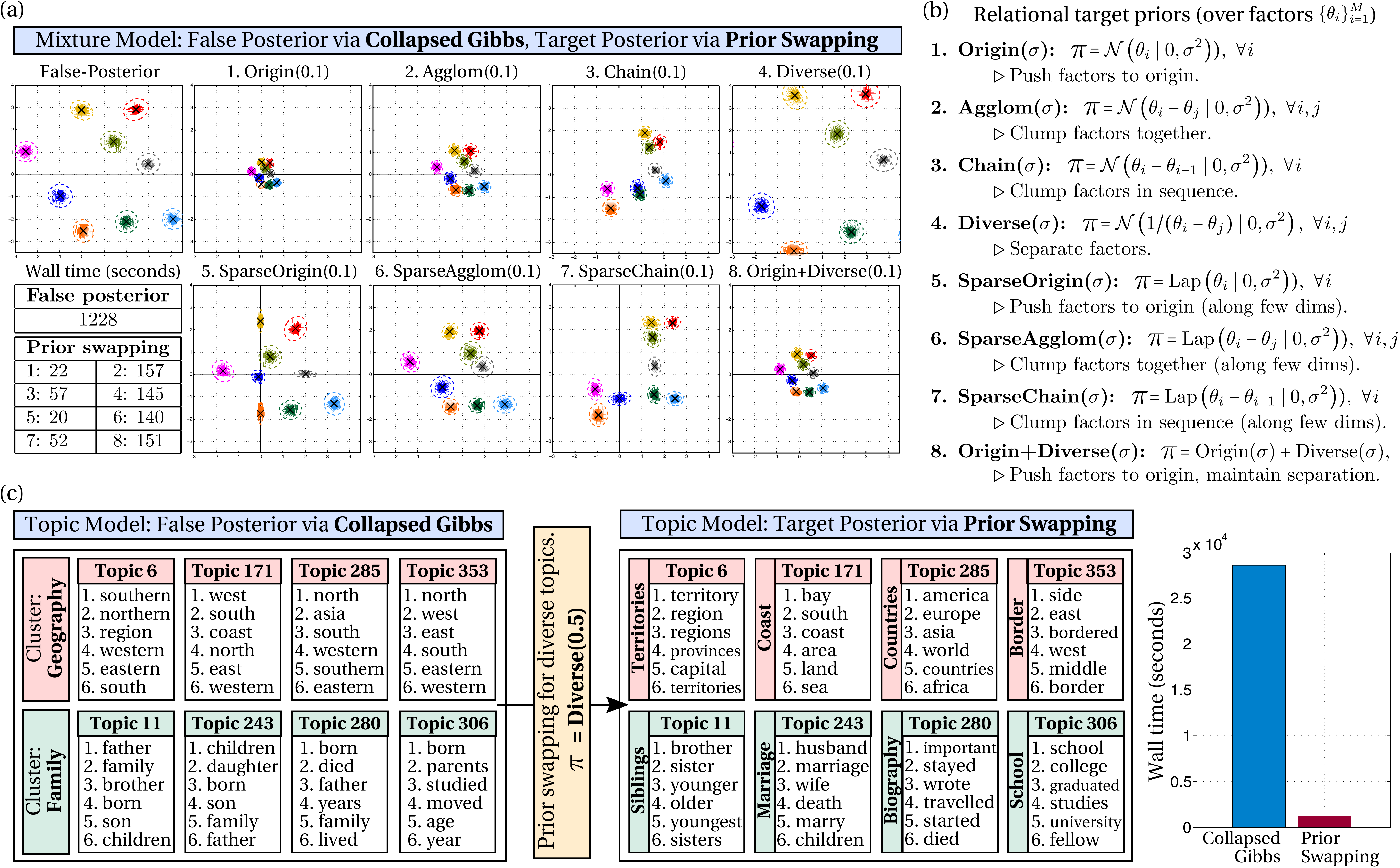}}
        \vspace{-2mm} \vspace*{-2mm}
        \caption{\label{fig:latentFactor}
        Latent factor models: (a) Prior swapping results for 
        relational target priors
        (defined in (b)) over components in a mixture model. (c)
        Prior swapping with a diversity-promoting target prior
        on an LDA topic model (Simple English Wikipedia corpus) 
        to separate redundant topic clusters; the top 6 words per topic
        are shown. In (a, c) we show 
        wall times for the initial inference and prior swapping.
        } 
        \vspace{-1mm} \vspace*{-1mm}
\end{figure*}
\subsection{Priors over Factors in Latent Variable Models}
Many latent variable models in machine learning---such 
as mixture models, topic models, probabilistic matrix 
factorization, and others---involve a set of latent factors 
(e.g. components or topics).
Often, we'd like to use priors that encourage 
interesting behaviors among the 
factors. For example,
we might want dissimilar factors through a 
diversity-promoting prior 
\cite{kwok2012priors,xie2016diversity}
or for the factors to show some sort of 
sparsity pattern \cite{mayrink2013sparse,
knowles2011nonparametric}. 
Inference in such models is often computationally 
expensive or designed on a case-by-case basis 
\cite{xie2016diversity,knowles2011nonparametric}.

However, when conjugate priors are placed
over the factor parameters, collapsed
Gibbs sampling can be applied.
In this method,
the factor parameters are integrated out, leaving 
only a subset of variables; on these, the conditional 
distributions can be computed analytically, which allows for Gibbs 
sampling over these variables.
Afterwards, samples of the collapsed 
factor parameters can be computed. 
Hence, we propose the following strategy: first, assign a prior for the 
factor parameters that allows for collapsed Gibbs sampling;
afterwards, reconstruct the factor samples and 
apply prior swapping for more complex relational priors over the 
factors. We can thus perform convenient inference in the 
collapsed model, yet apply more-sophisticated priors to 
variables in the uncollapsed model.

We first show results on a Gaussian mixture model (GMM), written
$x_i \sim \mathcal{N}(\mu_{z_i},\Sigma_{z_i}), \hspace{1mm}
z_i \sim \text{Dir}(\alpha), \hspace{1mm}
\{\mu_m\}_{m=1}^M \sim \pi, \hspace{1mm}
i=1\text{,...,}n$. Using a normal $\pi_f$ over
$\{\mu_m\}_{m=1}^M$ allows for
collapsed Gibbs sampling.
We also show results on a topic model (latent
Dirichlet allocation (LDA) \cite{blei2003latent})
for text data (for the form of this model,
see \cite{blei2003latent,wang2011collaborative}). 
Here, using a Dirichlet
$\pi_f$ over topics allows for
collapsed Gibbs sampling.
For mixture models, we generate synthetic data
from the above model ($n$=10,000, $d$=2, $M$=9), 
and for topic models, we use the Simple English 
Wikipedia\footnote{\url{https://simple.wikipedia.org/}} 
corpus (n=27,443 documents, vocab=10,192 words), 
and set $M$=400 topics.

In Fig.~\ref{fig:latentFactor}, we show results for mixture
and topic models. In (a) we show inferred posteriors over 
GMM components for a number of relational target priors, which we 
define in (b). In (c), we apply the diversity-promoting target
prior to LDA, to separate redundant topics. Here, we show
two topic clusters (``geography'' and ``family'') in 
$p_f(\theta|x^n)$, which are separated into distinct, 
yet thematically-similar, topics 
after prior swapping. In (a) and (c)
we also show wall times of the inference methods.


\vspace{-2mm}
\section{Conclusion}
\vspace{-1mm}
Given some false posterior inference
result, and an arbitrary target prior,
we have studied methods to accurately compute the
associated target posterior (or expectations with
respect to it), and to do this efficiently
by leveraging the pre-inferred result.
We have argued and shown empirically
that this strategy is effective even
when the false and target posteriors
are quite dissimilar.
We believe that this strategy
shows promise to allow a wider range of 
(and possibly less-costly) inference 
alorithms to be applied to certain models,
and to allow updated or new prior 
information to be more-easily incorporated
into models without re-incurring the 
full costs of standard inference algorithms.

\section{Acknowledgements}
W. Neiswanger and E. Xing are supported by 
NSF BigData Grant IIS1447676.

\bibliography{main.bib}
\bibliographystyle{icml2017}



\newpage

\addtolength{\oddsidemargin}{.25in}
\addtolength{\evensidemargin}{.25in}
\addtolength{\textwidth}{-0.5in}

\appendix

\onecolumn
\begin{center}
{\Large \textbf{Appendix for ``Post-Inference Prior Swapping''}}
\end{center}
\vspace{2mm}
\normalsize

\section{Details on the IS Example (Sec.~\ref{importanceSampling})}
Here we provide details on the IS example (for a 
normal $\pi_f$ and Laplace $\pi$) given in 
Sec.~\ref{importanceSampling}.

We made the following
statement: 
if $p_f(\theta|x^n) = \mathcal{N}(\theta|m,s^2)$,
in order for 
$|\mu_h - \mathbb{E}_{p_f}[\hat{\mu}_h^\text{IS}]| < \delta$,
we need
\begin{align*}
    T \geq \exp \left\{ \frac{1}{2s^2} 
    (|\mu_h - m| - \delta)^2 \right\}.
\end{align*}

To show this, we first give an upper bound on the 
expected value of the maximum of $T$ zero-mean $s^2$-variance
Gaussian random variables.
Let $\{\tilde{\theta}_t\}_{t=1}^T \sim g$, where
$g(\theta) = \mathcal{N}(\theta|0,s^2)$,
and let $Z = \max_t \{\tilde{\theta}_t\}_{t=1}^T$. Then, for some $b>0$,
\begin{align*}
    \exp\{b \mathbb{E}_g[Z]\} \leq \mathbb{E}_g[ \exp\{bZ\}]
    = \mathbb{E}_g\left[\max_t \left\{ \exp\{b \tilde{\theta}_t\} 
    \right\}_{t=1}^T \right]
    \leq \sum_{t=1}^T \mathbb{E}_g \left[\exp\{b \tilde{\theta}_t\}\right]
    = T \exp\{b^2 s^2 / 2\},
\end{align*}
where the first inequality is due to Jensen's inequality, 
and the final equality is due to the definition of
a Gaussian moment generating function.
The above implies that
\begin{align*}
    \mathbb{E}_g[Z] \leq \frac{\log T}{b} + \frac{b s^2}{2}.
\end{align*}

Setting $b = \sqrt{\frac{2}{s^2} \log T}$, we have that
\begin{align*}
    \mathbb{E}_g\left[
    \max_t \{\tilde{\theta}_t\}_{t=1}^T \right]
    = \mathbb{E}_g[Z]
    \leq s \sqrt{2 \log T}.
\end{align*}

However, note that for all $\{\tilde{\theta}_t\}_{t=1}^T$,
and weights $\{w(\tilde{\theta}_t)\}_{t=1}^T$ (such that
$\sum_{t=1}^T w(\tilde{\theta}_t) = 1$),
the IS estimate $\hat{\mu}_h^\text{IS}$ for $h(\theta) = \theta$ 
must be less than or equal to $\max_t \{\tilde{\theta}_t\}_{t=1}^T$
(since the weighted average of $\{\tilde{\theta}_t\}_{t=1}^T$ 
cannot be larger than the maximum of this set). Therefore, 

\begin{align*}
    \mathbb{E}_g\left[\hat{\mu}_h^\text{IS}\right] 
    \leq \mathbb{E}_g\left[\max_t \{\tilde{\theta}_t\}_{t=1}^T\right]
    \leq s \sqrt{2 \log T},
\end{align*}
and equivalently
\begin{align*}
    T \geq
    \exp\left\{ 
    \frac{1}{2s^2} \mathbb{E}_g\left[\hat{\mu}_h^\text{IS}\right]^2
    \right\}.
\end{align*}

In our example,  we wanted the expected estimate to be
within $\delta$ of $\mu_h$, i.e. we wanted 
$|\mu_h - \mathbb{E}_g[\hat{\mu}_h^\text{IS}]| < \delta$
$\iff$
$\delta - \mu_h \leq 
\mathbb{E}_g[\hat{\mu}_h^\text{IS}] \leq
\mu_h + \delta$,
and therefore,
\begin{align*}
    T \geq
    \exp\left\{ 
    \frac{1}{2s^2} \mathbb{E}_g\left[\hat{\mu}_h^\text{IS}\right]^2
    \right\}
    \geq
    \exp\left\{ 
    \frac{1}{2s^2} \left(\delta -\mu_h\right)^2
    \right\}.
\end{align*}

Finally, notice that the original statement involved samples
$\{\tilde{\theta}_t\}_{t=1}^T \sim 
p_f(\theta|x^n) = \mathcal{N}(m,s^2)$ 
(instead of from $g = \mathcal{N}(0,s^2)$). But this is 
equivalent to setting $p_f(\theta|x^n) = g(\theta)$,
and shifting our goal so that we want
$\delta - |\mu_h-m| \leq 
\mathbb{E}_{p_f}[\hat{\mu}_h^\text{IS}] \leq
|\mu_h-m| + \delta$. This gives us the desired bound:

\begin{align*}
    T \geq
    \exp\left\{ 
    \frac{1}{2s^2} \mathbb{E}_{p_f}\left[\hat{\mu}_h^\text{IS}\right]^2
    \right\}
    \geq
    \exp\left\{ 
    \frac{1}{2s^2} \left(\delta -|\mu_h-m|\right)^2
    \right\}.
\end{align*}

\section{Prior Swapping Pseudocode (for a false posterior PDF inference result $\tilde{p}_f(\theta)$)}

Here we give pseudocode for 
the prior swapping procedure,
given some false posterior PDF
inference result $\tilde{p}_f(\theta)$,
using the prior swap functions
$p_s(\theta) \propto \frac{\tilde{p}_f(\theta) \pi(\theta)}{\pi_f(\theta)}$ 
and $\nabla_\theta \log p_s(\theta) \propto 
\nabla_\theta \log \tilde{p}_f(\theta) + \nabla_\theta \log \pi(\theta) 
- \nabla_\theta \log \pi_f(\theta)$,
as described in Sec.~\ref{priorswapping}.

In Alg.~\ref{alg:PS_MH}, we show prior swapping via the Metropolis-Hastings 
algorithm, which makes repeated use of $p_s(\theta)$. 
In Alg.~\ref{alg:PS_HMC} we show prior swapping via Hamiltonian
Monte Carlo, which makes repeated use of $\nabla_\theta \log p_s(\theta)$.
A special case of Alg.~\ref{alg:PS_HMC}, which occurs
when we set the number of simulation steps to $L=1$ (in line 6),
is prior swapping via Langevin dynamics.

\begin{algorithm}[!ht]
    \caption{Prior swapping via Metropolis-Hastings.} 
    \label{alg:PS_MH}
    \KwIn{Prior swap function $p_s(\theta)$, and proposal $q$.}
    \vspace{2pt}
    \KwOut{Samples $\{\theta_t\}_{t=1}^T \sim p_s(\theta)$ 
    as $T\rightarrow \infty$.}
    \vspace{2pt}
    Initialize $\theta_0$.
            \hspace{27mm}$\triangleright$ Initialize Markov chain.\\
    \For{$t = 1,\ldots,T$}{
        Draw $\theta_s \sim q(\theta_s \mid \theta_{t-1})$.
            \hspace{5mm}$\triangleright$ Propose new sample.\\
        Draw $u \sim \text{Unif}([0,1])$.\\
        \If{$u < \min \left\{ 1, 
        \frac{p_s(\theta_s)q(\theta_t \mid \theta_s)}
        {p_s(\theta_t)q(\theta_s \mid \theta_t)}
        \right\}$}{
            Set $\theta_t \leftarrow \theta_s$.
            \hspace{17mm}$\triangleright$ Accept proposed sample.\\
        }
        \Else{
            Set $\theta_t \leftarrow \theta_{t-1}$.
            \hspace{14mm}$\triangleright$ Reject proposed sample.\\
        }
    }
\end{algorithm}

\begin{algorithm}[!ht]
    \caption{Prior swapping via Hamiltonian Monte Carlo.} 
    \label{alg:PS_HMC}
    \KwIn{Prior swap function $p_s(\theta)$, its 
    gradient-log $\nabla_\theta \log p_s(\theta)$, and 
    step-size $\epsilon$.}
    \vspace{2pt}
    \KwOut{Samples $\{\theta_t\}_{t=1}^T \sim p_s(\theta)$ 
    as $T\rightarrow \infty$.}
    \vspace{2pt}
    Initialize $\theta_0$.
            \hspace{41mm}$\triangleright$ Initialize Markov chain.\\
    \For{$t = 1,\ldots,T$}{
        Draw $r_t \sim \mathcal{N}(0,I)$.\\
        Set $(\widetilde{\theta}_0,\widetilde{r}_0) 
        \leftarrow (\theta_{t-1},r_{t-1})$ \\
        Set $\widetilde{r}_0 \leftarrow
        \widetilde{r}_0 + \frac{\epsilon}{2} 
        \nabla_\theta \log p_s(\widetilde{\theta}_0)$ .
            \hspace{7mm}$\triangleright$ Propose new sample (next 4 lines).\\
        \For{$l = 1,\ldots,L$}{
        Set $\widetilde{\theta}_l \leftarrow
        \widetilde{\theta}_{l-1} + \epsilon \widetilde{r}_{l-1}$.\\ 
        Set $\widetilde{r}_l \leftarrow
        \widetilde{r}_{l-1} + \epsilon 
        \nabla_\theta \log p_s(\widetilde{\theta}_l)$.\\
        }
        Set $\widetilde{r}_L \leftarrow
        \widetilde{r}_L + \frac{\epsilon}{2} 
        \nabla_\theta \log p_s(\widetilde{\theta}_L)$ .\\
        Draw $u \sim \text{Unif}([0,1])$.\\
        \If{$u < \min \left\{ 1, 
        \frac{p_s(\widetilde{\theta}_L)  
        \widetilde{r}_L^\top \widetilde{r}_L}{
        p_s(\theta_{t-1}) r_{t-1}^\top r_{t-1}}
        \right\}$}{
            Set $\theta_t \leftarrow \widehat{\theta}_L$.
            \hspace{30mm}$\triangleright$ Accept proposed sample.\\
        }
        \Else{
            Set $\theta_t \leftarrow \theta_{t-1}$.
            \hspace{27mm}$\triangleright$ Reject proposed sample.\\
        }
    }
\end{algorithm}

\newpage
\section{Proofs of Theoretical Guarantees}

Here, we prove the theorems
stated in Sec.~\ref{sampleBasedPS}.

Throughout this analysis, we assume that we have $T$ 
samples $\{\tilde{\theta}_{t} \}_{t=1}^{T_f}$ 
$\subset$ $\mathcal{X}$ $\subset$ $\mathbb{R}^d$ 
from the false-posterior $p_f(\theta|x^n)$, 
and that $b \in \mathbb{R}_+$ denotes the
bandwidth of our semiparametric false-posterior density 
estimator $\tilde{p}_f^{sp}(\theta)$. 
Let H\"{o}lder class $\Sigma(2,L)$ on $\mathcal{X}$ be defined as the set of
all $\ell = \lfloor 2 \rfloor$ times differentiable functions
$f:\mathcal{X} \rightarrow \mathbb{R}$ whose derivative $f^{(l)}$ satisfies
\begin{equation*}
    \lvert f^{(\ell)}(\theta) - f^{(\ell)}(\theta') \rvert
    \leq L \left\lvert \theta-\theta' \right\rvert^{2-\ell} \hspace{3mm}
    \text{ for all } \hspace{1mm} \theta,\theta' \in \mathcal{X}.
\end{equation*}

Let the class of densities $\mathcal{P}(2,L)$ be
\begin{equation*}
    \mathcal{P}(2,L) = \left\{  f \in \Sigma(2,L) 
    \hspace{1mm}\Big|\hspace{1mm} f \geq 0,
    \int f(\theta) d\theta=1 \right\}.
\end{equation*}

Let data $x^n = \{x_1,\ldots,x_n\} \subset \mathcal{Y} 
\subset \mathbb{R}^p$, 
let $\mathcal{Z} \subset \mathcal{Y}$ be any 
set such that $x^n \subset \mathcal{Z}$, and let
$\mathcal{F}_\mathcal{Z}(L)$ denote the set of densities 
$p:\mathcal{Y} \rightarrow \mathbb{R}$ that satisfy
\begin{equation*}
    |\log p(x) - \log p(x')|
    \leq
    L|x - x'|, 
    \hspace{3mm} \text{ for all } \hspace{1mm}
    x,x' \in \mathcal{Z}.
\end{equation*}

In the following theorems, we assume that the false-posterior density 
$p_f(\theta|x^n)$ is bounded, i.e. that
there exists some $B>0$ such that $p_f(\theta|x^n) \leq B$ for all
$\theta \in \mathbb{R}^d$; 
that the prior swap density $p_s(\theta) \in \mathcal{P}(2,L)$; 
and that the model family $p(x^n|\theta) \in 
\mathcal{F}_\mathcal{Z}(L)$ for some $\mathcal{Z}$.

\vspace{5mm}
\begin{namedtheorem}[2.1]
For any $\alpha = (\alpha_1,\ldots,\alpha_k) \subset \mathbb{R}^p$
and $k>0$ let 
$\tilde{p}_f^\alpha(\theta)$ be defined 
as in Eq.~(\ref{eq:paraFalsePost}).
Then, there exists $M>0$ such that 
$\frac{p_f(\theta|x^n)}{\tilde{p}_f^\alpha(\theta)} < M$, for all $\theta \in \mathbb{R}^d$.
\end{namedtheorem}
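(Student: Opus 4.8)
The plan is to exploit the cancellation of the false prior $\pi_f$ from the ratio, which reduces the claim to a uniform bound on a ratio of likelihood-type products. Write $p_f(\theta|x^n) = \pi_f(\theta)\,p(x^n|\theta)/Z_f$ with $Z_f = \int \pi_f(\theta)\,p(x^n|\theta)\,d\theta \in (0,\infty)$, and $\tilde p_f^\alpha(\theta) = \pi_f(\theta)\prod_{j=1}^k p(\alpha_j|\theta)^{n/k}/Z_\alpha$ with $Z_\alpha = \int \pi_f(\theta)\prod_{j=1}^k p(\alpha_j|\theta)^{n/k}\,d\theta \in (0,\infty)$, the finiteness of $Z_\alpha$ being exactly the requirement that $\tilde p_f^\alpha$ in Eq.~(\ref{eq:paraFalsePost}) is a genuine density. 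Using the product form $p(x^n|\theta) = \prod_{i=1}^n p(x_i|\theta)$, the $\pi_f$ factors cancel and
\[
\frac{p_f(\theta|x^n)}{\tilde p_f^\alpha(\theta)} = \frac{Z_\alpha}{Z_f}\cdot\frac{\prod_{i=1}^n p(x_i|\theta)}{\prod_{j=1}^k p(\alpha_j|\theta)^{n/k}},
\]
so it suffices to bound the last factor, uniformly in $\theta$, by a finite constant.

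The key point is that the numerator is a product of $n$ per-point conditional densities while the denominator is a product over the $k$ points $\alpha_j$ with total exponent $n$, so after taking logs the two can be compared term-by-term via the log-Lipschitz assumption on the model. I would first enlarge $\mathcal{Z}$ so that $x^n \cup \{\alpha_1,\ldots,\alpha_k\} \subset \mathcal{Z}$, which is harmless since the theorem only claims existence of $M$ for each fixed $\alpha$ (so $M$ is allowed to depend on $\alpha$). Then the assumption $p(\cdot|\theta) \in \mathcal{F}_\mathcal{Z}(L)$ gives, for every $\theta$, $|\log p(x_i|\theta) - \log p(\alpha_j|\theta)| \le L|x_i - \alpha_j| \le LD$, where $D := \max_{i,j}|x_i - \alpha_j| < \infty$. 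Using the identity
\[
\sum_{i=1}^n \log p(x_i|\theta) - \frac{n}{k}\sum_{j=1}^k \log p(\alpha_j|\theta) = \frac{1}{k}\sum_{j=1}^k\sum_{i=1}^n\big(\log p(x_i|\theta) - \log p(\alpha_j|\theta)\big),
\]
the right-hand side is at most $\tfrac{1}{k}\cdot (kn) \cdot LD = nLD$, hence $\prod_{i=1}^n p(x_i|\theta)\big/\prod_{j=1}^k p(\alpha_j|\theta)^{n/k} \le e^{nLD}$ for all $\theta$. Combining with the first display, the theorem holds with $M = (Z_\alpha/Z_f)\,e^{nLD}$.

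The main obstacle is really just the bookkeeping around the domain assumption: one must make sure the log-Lipschitz bound is available at the pseudo-data points $\alpha_j$, not only at the observed $x_i$, which is why $\mathcal{Z}$ is taken large enough to contain them; everything downstream ($L$, $D$, and hence $M$ depending on $\alpha$) is then fine because the conclusion is stated for each fixed $\alpha$. A secondary point is to record that $Z_\alpha < \infty$, which is part of the definition of $\tilde p_f^\alpha$ as a density. Beyond these, the argument is the elementary cancellation plus the one-line logarithmic estimate above; the constant $e^{nLD}$ is crude but all that is needed, since Corollary~2.2 only requires \emph{some} finite $M$.
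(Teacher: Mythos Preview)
Your proof is correct and follows essentially the same route as the paper's: cancel $\pi_f$, apply the log-Lipschitz assumption $p(\cdot|\theta)\in\mathcal{F}_\mathcal{Z}(L)$ to bound each $\log p(x_i|\theta)-\log p(\alpha_j|\theta)$ by $LD$ with $D=\max_{i,j}|x_i-\alpha_j|$, and multiply up to get $M=(Z_\alpha/Z_f)e^{nLD}$. Your version is in fact slightly more careful than the paper's, making explicit the averaging identity that turns the per-pair bound into the product bound and noting that $\mathcal{Z}$ must contain the $\alpha_j$'s.
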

\begin{proof}
To prove that there exists $M>0$ such that 
$\frac{p_f(\theta|x^n)}{\tilde{p}_f^\alpha(\theta)}<M$, 
note that the false posterior can be written
\begin{align*}
    p_f(\theta|x^n) = \frac{1}{Z_1} \pi_f(\theta) \prod_{i=1}^n L(\theta|x_i)
    = \frac{1}{Z_1} \pi_f(\theta) \prod_{i=1}^n p(x_i|\theta),
\end{align*}
and the parametric estimate $\tilde{p}_f^\alpha(\theta)$
is defined to be
\begin{align*}
 \tilde{p}_f^\alpha(\theta)
    = \frac{1}{Z_2}
    \pi_f(\theta)
    \prod_{j=1}^k p(\alpha_j|\theta)^{n/k}.
\end{align*}

Let $d = \max_{i,j} |x_i - \alpha_j|$. For any
$i \in \{1,\ldots,n\}$, $j \in \{1,\ldots,k\}$, 
\begin{align*}
    |\log p(x_i|\theta) - \log p(\alpha_j|\theta) |
    \leq Ld
    \implies
    \left|\log \frac{p(x_i|\theta)}{p(\alpha_j|\theta)}\right|
    \leq Ld,
\end{align*}
and
\begin{align*}
    \exp\left\{\log \frac{p(x_i|\theta)}{p(\alpha_j|\theta)}\right\}
    \leq 
    \exp\left\{  
    \left|\log \frac{p(x_i|\theta)}{p(\alpha_j|\theta)}\right|
    \right\}
    \leq \exp\{Ld\}
    \implies
    \frac{p(x_i|\theta)}{p(\alpha_j|\theta)} \leq \exp\{Ld\}.
\end{align*}

Therefore
\begin{align*}
    \frac{p_f(\theta|x^n)}{\tilde{p}_f^\alpha(\theta)}
    \leq 
    \frac{Z_2}{Z_1}
    \frac{\prod_{i=1}^n p(x_i|\theta)}
    {\prod_{j=1}^k p(\alpha_j|\theta)^{n/k}}
    \leq
    \frac{Z_2}{Z_1}\exp\{nLd\} = M.
\end{align*}

\end{proof}

\vspace{5mm}
\begin{namedcorollary}[2.1.1]
For $\{\theta_t\}_{t=1}^T \sim p_s^\alpha(\theta) \propto 
\frac{\tilde{p}_f^\alpha(\theta) \pi(\theta)}{\pi_f(\theta)}$,
$w(\theta_t) = 
\frac{p_f(\theta_t|x^n)}{\tilde{p}_f^\alpha(\theta_t)}
\left(\sum_{r=1}^T
\frac{p_f(\theta_r|x^n)}{\tilde{p}_f^\alpha(\theta_r)}
\right)^{-1}$,
and test function that satisfies 
$\text{Var}_p\left[h(\theta)\right] < \infty$,
the variance of IS estimate $\hat{\mu}_h^\text{PSis} = 
\sum_{t=1}^T h(\theta_t) w(\theta_t)$
is finite.
\end{namedcorollary}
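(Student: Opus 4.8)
The plan is to reduce the claim about finite variance of the self-normalized IS estimate to the standard sufficient condition already invoked in the paper (following Geweke), namely that the second moment $\mathbb{E}_{p_s^\alpha}\!\left[h(\theta)^2\, w(\theta)^2\right]$ (equivalently $\mathbb{E}_{p}\!\left[h(\theta)^2\, p_f(\theta|x^n)/\tilde{p}_f^\alpha(\theta)\right]$ up to the normalization constant) is finite, and then invoke Theorem~2.1 to bound the weight ratio uniformly. First I would write out, as in Sec.~\ref{sampleBasedPS}, that up to a positive constant the relevant second moment is
\begin{align*}
    \mathbb{E}_{p_s^\alpha}\!\left[ h(\theta)^2 \frac{p_f(\theta|x^n)^2}{\tilde{p}_f^\alpha(\theta)^2} \right]
    \propto
    \mathbb{E}_{p}\!\left[ h(\theta)^2 \frac{p_f(\theta|x^n)}{\tilde{p}_f^\alpha(\theta)} \right],
\end{align*}
using that $p_s^\alpha(\theta) \propto \tilde{p}_f^\alpha(\theta)\pi(\theta)/\pi_f(\theta)$ and $p(\theta|x^n) \propto p_f(\theta|x^n)\pi(\theta)/\pi_f(\theta)$, so the density ratio $p(\theta|x^n)/p_s^\alpha(\theta)$ is proportional to $p_f(\theta|x^n)/\tilde{p}_f^\alpha(\theta)$.

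Next I would apply Theorem~2.1, which gives a constant $M>0$ with $p_f(\theta|x^n)/\tilde{p}_f^\alpha(\theta) < M$ for all $\theta \in \mathbb{R}^d$. Substituting this bound,
\begin{align*}
    \mathbb{E}_{p}\!\left[ h(\theta)^2 \frac{p_f(\theta|x^n)}{\tilde{p}_f^\alpha(\theta)} \right]
    \leq M\, \mathbb{E}_{p}\!\left[ h(\theta)^2 \right]
    = M\left( \text{Var}_p[h(\theta)] + \mathbb{E}_p[h(\theta)]^2 \right) < \infty,
\end{align*}
where finiteness of $\text{Var}_p[h(\theta)]$ is assumed, and finiteness of $\mathbb{E}_p[h(\theta)]^2$ follows since finite variance and the finiteness of the first moment go together for the test functions under consideration (or one simply assumes $h \in L^2(p)$). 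This shows the key second moment is finite.

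Finally I would translate this into a statement about the variance of the self-normalized estimator $\hat{\mu}_h^\text{PSis} = \sum_{t=1}^T h(\theta_t) w(\theta_t)$: by the standard delta-method/ratio-estimator analysis for self-normalized importance sampling, its asymptotic variance is a finite linear combination of $\mathbb{E}_{p_s^\alpha}[h^2 v^2]$, $\mathbb{E}_{p_s^\alpha}[h v^2]$, and $\mathbb{E}_{p_s^\alpha}[v^2]$ where $v(\theta) \propto p_f(\theta|x^n)/\tilde{p}_f^\alpha(\theta)$ is the unnormalized weight; each of these is dominated by the bounded ratio $M$ times a moment of $h$ under $p$, hence finite. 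The main (and really only) obstacle is the bookkeeping at the very end: being careful about the self-normalization, since the weights divide by a random sum, so that ``finite variance'' is understood in the usual sense for self-normalized IS (finite asymptotic variance, or finite variance conditional on the normalizer being bounded away from zero); modulo that standard caveat the result is an immediate consequence of Theorem~2.1 and the assumption $\text{Var}_p[h(\theta)] < \infty$.
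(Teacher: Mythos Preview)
Your proposal is correct and takes essentially the same approach as the paper: both invoke Theorem~2.1 to obtain the uniform bound $p_f(\theta|x^n)/\tilde{p}_f^\alpha(\theta) < M$, and then appeal to the Geweke sufficient condition for finite-variance IS. The paper's proof is a one-sentence citation of Geweke plus Theorem~2.1; you have simply unpacked that citation by writing out the second-moment computation and the self-normalization bookkeeping explicitly.
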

\begin{proof}
This follows directly from the sufficient conditions
for finite variance IS estimates
given by \cite{geweke1989bayesian},
which we have proved are satisfied
for $\hat{\mu}_h^\text{PSis}$ in Theorem 2.1.
\end{proof}


\vspace{5mm}
\begin{namedtheorem}[2.2]
Given false posterior samples 
$\{\tilde{\theta}_t\}_{t=1}^{T_f} \sim p_f(\theta|x^n)$
and $b \asymp T_f^{-1/(4+d)}$, the estimator
$p_s^{sp}$ is 
consistent for $p(\theta|x^n)$, 
i.e. its mean-squared error satisfies
\begin{align}
    \sup_{ p(\theta|x^n) \in \mathcal{P}(2,L) } \hspace{2mm}
    \mathbb{E}\left[ \int\left(
            p_s^{sp}(\theta) - 
            p(\theta|x^n)\right)^2 
            d\theta \right]
    < \frac{c}{T_f^{4/(4+d)}} \nonumber
\end{align}
for some $c>0$ and $0<b\leq 1$.
\end{namedtheorem}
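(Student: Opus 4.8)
The plan is to recognize $p_s^{sp}(\theta)$ as a close relative of the classical kernel density estimator and to piggyback on the standard minimax MSE analysis for densities in the H\"older class $\mathcal{P}(2,L)$. First I would rewrite the semiparametric estimate as
\begin{align*}
    \tilde{p}_f^{sp}(\theta) = \frac{1}{T_f}\sum_{t=1}^{T_f} \frac{1}{b^d} K\!\left(\frac{\|\theta-\tilde{\theta}_t\|}{b}\right)\, r_\alpha(\theta,\tilde{\theta}_t),
\end{align*}
where $r_\alpha(\theta,\tilde{\theta}_t) = \tilde{p}_f^\alpha(\theta)/\tilde{p}_f^\alpha(\tilde{\theta}_t)$ is the parametric correction ratio, and similarly $p_s^{sp}(\theta) = p_s^\alpha(\theta)\cdot \frac{1}{T_f}\sum_t \frac{1}{b^d}K(\|\theta-\tilde\theta_t\|/b)/\tilde{p}_f^\alpha(\tilde\theta_t)$. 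The key structural observation (this is the point of the construction in \cite{hjort1995nonparametric}) is that the parametric ratio is designed so that $\mathbb{E}_{p_f}[\frac{1}{b^d}K(\|\theta-\tilde\theta\|/b) r_\alpha(\theta,\tilde\theta)]$ equals a kernel-smoothed version of $p_f(\theta|x^n)\cdot(\tilde{p}_f^\alpha(\theta)/p_f(\theta|x^n))$ — i.e. the correction exactly cancels the parametric part, leaving a smoothing of the true target density once we multiply by $p_s^\alpha/\tilde p_f^\alpha$-type factors and use $p_s(\theta) = \tilde p_f(\theta)\pi(\theta)/\pi_f(\theta)$ with $\tilde p_f = p_f(\cdot|x^n)$ giving $p_s = p(\cdot|x^n)$.

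Next I would carry out the usual bias--variance decomposition:
\begin{align*}
    \mathbb{E}\!\left[\int (p_s^{sp}(\theta)-p(\theta|x^n))^2 d\theta\right] = \int \mathrm{Bias}^2(\theta)\, d\theta + \int \mathrm{Var}(p_s^{sp}(\theta))\, d\theta.
\end{align*}
For the bias term I would Taylor-expand to second order, using $p(\theta|x^n) = p_s(\theta) \in \mathcal{P}(2,L)$ (twice differentiable with $L$-Lipschitz second derivative) and the standard kernel moment conditions ($\int K = 1$, $\int u K(u)du = 0$, $\int \|u\|^2 K(u) du < \infty$), to get $|\mathrm{Bias}(\theta)| \le c_1 b^2$ uniformly, hence $\int \mathrm{Bias}^2 \le c_1^2 b^4$. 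The ratio factor $r_\alpha$ is smooth and bounded on the relevant region (this is where the $\mathcal{F}_\mathcal{Z}(L)$ Lipschitz-in-data assumption and the boundedness $p_f \le B$ from Theorem 2.1's proof get reused to control $r_\alpha$ and keep constants finite), so it does not spoil the $b^2$ order. For the variance term I would bound $\mathrm{Var}(p_s^{sp}(\theta)) \le \frac{1}{T_f}\mathbb{E}[(\frac{1}{b^d}K(\cdots)r_\alpha p_s^\alpha/\tilde p_f^\alpha)^2] \le \frac{c_2}{T_f b^d}$, again using boundedness of $K$, of the density, and of the correction ratio, so $\int \mathrm{Var} \le \frac{c_2}{T_f b^d}$ (the integral over $\theta$ of the squared kernel contributes an $\int K^2$ constant after the change of variables $u = (\theta-\tilde\theta)/b$).

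Finally I would combine: the MSE is $\le c_1^2 b^4 + c_2 T_f^{-1} b^{-d}$, and optimizing gives $b \asymp T_f^{-1/(4+d)}$, yielding the claimed rate $c\, T_f^{-4/(4+d)}$ with $c$ depending on $L$, $B$, $\int K^2$, $\int \|u\|^2 K$, and the bound $M$ on $\tilde p_f^\alpha$-ratios from Theorem 2.1. The $\sup$ over $p(\theta|x^n) \in \mathcal{P}(2,L)$ is handled because every constant above was obtained uniformly over the H\"older class and the data-Lipschitz model class. The main obstacle I anticipate is rigorously controlling the parametric correction ratio $r_\alpha(\theta,\tilde\theta_t)$ — showing it is bounded above and below and sufficiently smooth on the effective support (where $\|\theta - \tilde\theta_t\| \le b$), so that it genuinely behaves like a slowly-varying weight and the expectation identity $\mathbb{E}_{p_f}[\cdots]$ cleanly reproduces a smoothing of $p(\theta|x^n)$ rather than of something else; once that is pinned down, the rest is the textbook kernel-density minimax calculation (cf. \cite{wasserman2006all}), and I would cite \cite{hjort1995nonparametric} for the consistency of the underlying semiparametric estimator and adapt their argument to track the explicit rate.
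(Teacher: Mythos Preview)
Your proposal is correct and follows essentially the same route as the paper: a bias--variance decomposition yielding $O(b^4)$ and $O((T_f b^d)^{-1})$ respectively, then balancing with $b \asymp T_f^{-1/(4+d)}$, with the underlying rates coming from \cite{hjort1995nonparametric,wasserman2006all}. The one simplification the paper exploits that you do not is that $p_s^{sp}(\theta)$ equals $\tilde{p}_f^{sp}(\theta)$ times the \emph{deterministic} factor $\pi(\theta)/\pi_f(\theta)$ (up to normalization), so both bias and variance of $p_s^{sp}$ reduce immediately to those of $\tilde{p}_f^{sp}$, for which the cited results apply directly---this sidesteps your anticipated obstacle of controlling $r_\alpha$ from scratch.
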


\begin{proof}
To prove mean-square consistency 
of our semiparametric prior swap 
density estimator $p_s^{sp}$,
we give a bound on the 
mean-squared error (MSE), and show that it 
tends to zero as we increase the
number of samples $T_f$ drawn from the false-posterior. 
To prove this, we bound
the bias and variance of the estimator,
and use this to bound the MSE.
In the following, to avoid cluttering notation,
we will drop the subscript $p_f$ in $\mathbb{E}_{p_f}[\cdot]$.

We first bound the bias of our semiparametric
prior swap estimator. 
For any $p(\theta|x^n) \in \mathcal{P}(2,L)$, we can write the bias as 
\begin{align*}
        \left| \mathbb{E} \left[ p_s^{sp}(\theta) \right] 
        - p(\theta|x^n) \right| 
        &=
        c_1 \left| \mathbb{E} \left[ \tilde{p}_f^{sp}(\theta) 
        \frac{\pi(\theta)}{\pi_f(\theta)} \right] - 
        p_f(\theta|x^n) 
        \frac{\pi(\theta)}{\pi_f(\theta)} \right|  \\
        &=
        c_2 \left| \frac{\pi(\theta)}{\pi_f(\theta)} 
        \mathbb{E} \left[ \tilde{p}_f^{sp}(\theta) \right] - 
        p_f(\theta|x^n)
        \right|  \\
        &= 
        c_3 \left| \mathbb{E} \left[ \tilde{p}_f^{sp}(\theta) 
        \right] - p_f(\theta|x^n) \right| \\
        &\leq ch^2
\end{align*}
for some $c>0$, where we have 
used the fact that $ \left| \mathbb{E}
\left[ \tilde{p}_f^{sp}(\theta) \right] - p_f(\theta|x^n) \right|
\leq  \tilde{c} h^2$ for some~$\tilde{c} > 0$ 
(given in \cite{hjort1995nonparametric,wasserman2006all}).

We next bound the variance of our 
semiparametric prior swap estimator.
For any $p(\theta|x^n) \in \mathcal{P}(2,L)$, we can write the
variance of our estimator as 
\begin{align*}
    \text{Var} \left[ p_s^{sp}(\theta) \right]
    &= c_1 \text{Var} \left[ \tilde{p}_f^{sp}(\theta) 
        \frac{\pi(\theta)}{\pi_f(\theta)} \right] \\
    &= \frac{\pi(\theta)^2}{\pi_f(\theta)^2} \text{Var}
    \left[ \tilde{p}_f^{sp}(\theta) \right] \\
    &\leq \frac{c}{T_f h^d}
\end{align*}
for some $c>0$, where we have used the facts that 
$\text{Var}\left[ \tilde{p}_f^{sp}(\theta) \right]
    \leq \frac{c}{Th^d}$ for some $c>0$ and 
    $\mathbb{E}\left[ \tilde{p}_f^{sp}(\theta) \right]^2 
    \leq \tilde{c}$ for some $\tilde{c}>0$
(given in \cite{hjort1995nonparametric,wasserman2006all}).
Next, we will use these two results to bound the 
mean-squared error of our semiparametric
prior swap estimator, which shows that it is 
mean-square consistent.

We can write the mean-squared error as the sum of the variance and
the bias-squared, and therefore,
\begin{align*}
    \mathbb{E}\left[ \int\left(p_s^{sp}(\theta) - 
        p(\theta|x^n)\right)^2 d\theta \right] 
        &\leq c_1 h^2 + \frac{c_2}{Th^d} \\
        &= \frac{c}{T_f^{4/(4+d)}}
\end{align*}
for some $c>0$, using the fact that $h \asymp T_f^{-1/(4+d)}$.
\end{proof}

\vspace{3mm}

\section{Further Empirical Results}

Here we show further empirical results on
a logistic regression model
with hierarchical target prior given by 
$ \pi = \mathcal{N}(0,\alpha^{-1}I), \hspace{1mm}
\alpha \sim \text{Gamma}(\gamma,1)$.
We use synthetic data so that
we are able to compare the timing and 
posterior error of different
methods as we tune $n$ and $d$.

In this experiment, we assume that we are given
samples from a false posterior $p_f(\theta|x^n)$,
and we want to most-efficiently compute
the target posterior under prior $\pi(\theta)$.
In addition to the prior swapping methods,
we can run standard iterative inference algorithms, such
as MCMC or variational inference (VI), on the 
target posterior (initializing them, 
for example, at the false posterior mode)
as comparisons.
The following experiments aim to show that,
once the data size $n$ grows large enough,
prior swapping methods become more
efficient than standard inference algorithms.
They also aim to show that the held-out test error
of prior swapping matches that of 
these standard inference algorithms.
In these experiments, we also add a prior 
swap method called \emph{prior swapping VI};
this method involves making a VI approximation
to $p_f(\theta|x^n)$, and using it for 
$\tilde{p}_f(\theta)$. Prior swapping VI
allows us to see whether the test error is 
similar to standard VI inference algorithms,
which compute some approximation to the posterior.
Finally, we show results over a range of 
target prior hyperparameter 
values $\gamma$ to show that prior swapping
maintains accuracy (i.e. has a similar error
as standard inference algorithms) over
the full range.

We show results in Fig.~\ref{fig:lrAll}. 
In (a) and (b) we vary the number of 
observations ($n$=10-120,000) and see that prior 
swapping has a constant wall time while
the wall times of both MCMC and VI increase with $n$. 
In (b) we see that the prior 
swapping methods achieve the same test error as the standard
inference methods. In (c) and (d) we vary the number
of dimensions ($d$=1-40). In this case, all methods have 
increasing wall time, and again the test errors match.
In (e), (f), and (g), we vary the prior hyperparameter 
($\gamma$=1-1.05). For prior swapping,
we infer a single $\tilde{p}_f(\theta)$ (using $\gamma = 1.025$)
with both MCMC and VI applied to $p_f(\theta|x^n)$, 
and compute \emph{all other} 
hyperparameter results using this $\tilde{p}_f(\theta)$.
This demonstrates that prior swapping can quickly infer
correct results over a range of hyperparameters.
Here, the prior swapping semiparametric method matches 
the test error of MCMC slightly better than the 
parametric method.

\begin{figure*}[!ht]
        \makebox[\textwidth][c]{
        \hspace{-1mm}
        \includegraphics[width=1\textwidth]{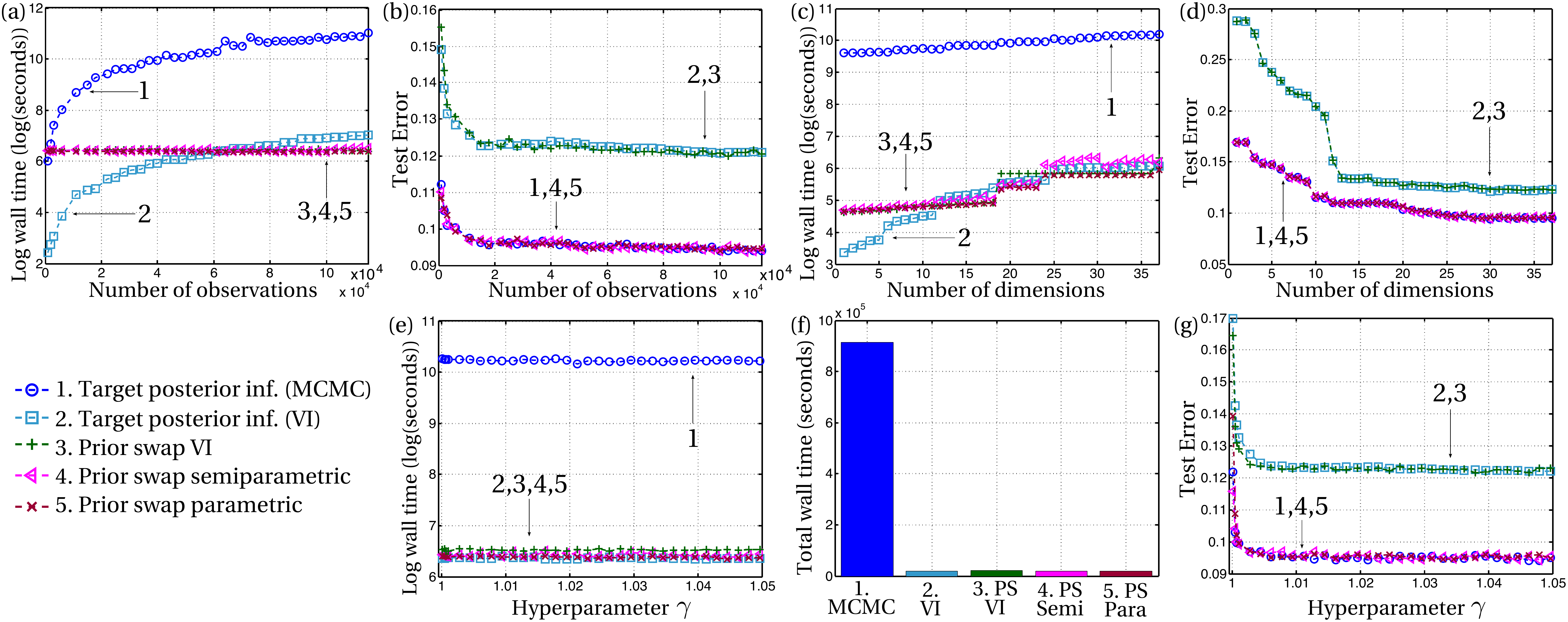}}
        \vspace{-1mm} \vspace*{-1mm}
        \caption{\label{fig:lrAll} Bayesian hierarchical 
        logistic regression: (a-b) Wall time and test error
        comparisons for varying data size $n$. As $n$ is 
        increased, wall time remains constant for prior 
        swapping but grows for standard inference methods.
        (c-d) Wall time and test error comparisons for 
        varying model dimensionality $d$. (e-g) 
        Wall time and test error comparisons for 
        inferences on a set of prior hyperparameters
        $\gamma \in [1,1.05]$.
        Here, a single false posterior 
        $\tilde{p}_f(\theta)$ (computed at $\gamma = 1.025$) 
        is used for prior swapping
        on all other hyperparameters.
        }
\end{figure*}

\end{document}